\documentclass[11pt]{article} % For LaTeX2e
\usepackage[preprint]{tmlr}
\usepackage{amssymb,amsmath,amsfonts}
\usepackage{graphicx,xcolor,enumitem}
\usepackage{epsfig}
\usepackage{amsthm}
\usepackage{bm}
\usepackage{subcaption}
\usepackage{nicematrix}

\usepackage{multirow}
\usepackage{algorithm, algorithmic}
\usepackage{booktabs}

\usepackage{diagbox}

\RequirePackage[breaklinks=true, hidelinks]{hyperref}
\usepackage{breakcites}

\newtheorem{theorem}{Theorem}

\newtheorem{lemma}[theorem]{Lemma}

\newtheorem{proposition}[theorem]{Proposition}

\theoremstyle{definition}

\numberwithin{equation}{section}
\numberwithin{theorem}{section}

\title{Volterra Generative Models}

\author{\name Yusen Jia \email yjia256@connect.hkust-gz.edu.cn \\
	\addr The Hong Kong University of Science and Technology (Guangzhou)
	\AND
	\name Bingyan Han \email bingyanhan@hkust-gz.edu.cn \\
	\addr The Hong Kong University of Science and Technology (Guangzhou)
	}

\def\month{06}  % Insert correct month for camera-ready version
\def\year{2026} % Insert correct year for camera-ready version
\def\openreview{\url{https://openreview.net/forum?id=XXXX}} % Insert correct link to OpenReview for camera-ready version

\begin{document}

	\maketitle
	
	\begin{abstract}
		Score-based diffusion models typically use Brownian perturbations, which provide tractable reverse-time dynamics but impose memoryless noising. We introduce Volterra generative models, a continuous-time score-based framework whose forward process injects path-dependent noise through fractional kernels. To handle the non-Markovian and non-semimartingale dynamics, we construct finite-dimensional Markovian lifts using Gaussian quadrature in both regimes and a hybrid finite-difference exponential approximation in the smooth regime. We prove squared error bounds, derive an augmented linear-Gaussian forward process, and show that the learning can remain data-dimensional by considering residual states and analytic auxiliary Gaussian scores. We also identify covariance and reverse-time degeneracies caused by shared Brownian factors and signed smooth-regime weights. The degeneracy motivates stabilized conditioning and, for stiff larger lifts, a Gaussian-bridge reconstruction sampler. Experiments on MNIST and CIFAR-10 show that persistent fractional perturbations with small Markovian lifts can improve score-based generation on MNIST and provide a promising extension to natural images, while the bridge sampler provides a stability mechanism for larger lifts.
	\end{abstract}
	%%	\noindent{\textbf {Mathematics Subject Classification:} 91G20, 60G42, 90C46.} % \\
	
\section{Introduction}
\label{sec:introduction}

Diffusion generative models have become a central methodology for learning high-dimensional data distributions. Their basic principle is to construct a forward noising process that gradually transforms data into a tractable reference distribution, and then to learn the reverse transformation from noise back to data. Early diffusion probabilistic models realized this idea through discrete nonequilibrium noising chains \citep{sohl2015deep,ho2020denoising}, while score-based models learned gradients of perturbed data densities and used them for sampling \citep{song2019generative}. The continuous-time formulation of \citet{song2020score} unifies these views through stochastic differential equations (SDEs): a prescribed forward diffusion transports the data distribution to noise, and the reverse-time SDE is obtained by replacing an unknown drift term with the time-dependent score. This framework provides a flexible interface between stochastic analysis, denoising score matching \citep{hyvarinen2005estimation,vincent2011connection}, numerical SDE solvers, and neural score networks. Subsequent work has further clarified the design space of continuous-time diffusion models, including choices of noise schedules and samplers \citep{karras2022elucidating}. A complementary line of work improves the diffusion models through scheduled sampling, distillation, and efficient architectures; examples include multi-step denoising scheduled sampling \citep{ren2024mdss}, spatial fitting-error reduction distillation \citep{zhou2024sferd}, and physics-inspired generative models \citep{xu2023pfgmpp}.

Despite their empirical success, standard diffusion models usually rely on Brownian-driven Markovian perturbations. This choice is analytically convenient: Brownian increments are independent, the forward process is Markovian, and the conditional transition laws are often explicit Gaussians. However, the same structure also restricts the temporal dependence of the noising mechanism. The perturbation at a given time depends on the past only through the current state, and the injected noise has no memory. This is potentially limiting when one wants the forward process to encode persistent correlations, rough fluctuations, or multi-scale memory. Recent work has therefore begun to explore alternatives to Brownian noising, including L\'evy-driven score models \citep{yoon2023score} and fractional-noise-based generative models \citep{nobis2024generative}. These directions suggest that the stochastic process used for noising is not merely an implementation detail; it is a modeling choice that can affect sample quality and numerical behavior.

This paper develops a Volterra-based generative model in which the Brownian perturbation is replaced by a path-dependent stochastic convolution. The forward process takes the form
\[
\mathbf{X}_t
=
\rho(t)\mathbf{X}_0
+
\int_0^t G(t-s)\mathbf{b}(s)\,\mathrm{d}s
+
\int_0^t G(t-s)\boldsymbol{\Sigma}(s)\,\mathrm{d}\mathbf{B}_s,
\]
where the deterministic attenuation \(\rho(t)\mathbf{X}_0\) controls the decay of the initial signal and the kernel \(G\) determines the memory structure of the injected perturbation. Our main examples are fractional kernels
\[
G_H(t)=\frac{t^{H-\frac12}}{\Gamma(H+\frac12)},
\qquad H\in(0,1).
\]
Stochastic Volterra equations with such kernels have been studied extensively, especially in connection with rough volatility and Markovian approximation methods \citep{carmona1998fractional,gatheral2018volatility,alfonsi2024approximation,bayer2023markovian}.

The main difficulty is that Volterra processes are non-Markovian and non-semimartingale in general. Consequently, the usual reverse-time SDE machinery for Markov diffusions cannot be applied directly. The core idea of this work is to approximate the Volterra kernel by a finite exponential mixture and to lift the resulting process to a finite-dimensional Markovian system. For \(H<\frac12\), the fractional kernel is completely monotone and admits a positive Laplace representation, so Gaussian quadrature yields a positive exponential approximation. For \(H>\frac12\), complete monotonicity fails. We instead use a representation involving \(t e^{-\gamma t}\), apply Gaussian quadrature to obtain an intermediate approximation, and then convert each basis function into exponentials through a hybrid finite-difference rule. This produces a signed exponential mixture but keeps all final mean-reversion rates nonnegative.

The lifted model introduces new challenges that are specific to the generative setting. All Markovian factors are driven by the same Brownian motion, so the auxiliary covariance matrix can be ill-conditioned even when the exponential weights are positive. In the smooth regime \(H>\frac12\), the finite-difference construction creates nearby signed pairs of exponential rates, which generates additional near-null covariance directions. Moreover, the signed weights cancel at the origin, eliminating the direct diffusion loading of the primary state and making a naive reverse Euler sampler unstable. These effects are structural rather than cosmetic: they determine how the score should be parameterized and how reverse-time sampling should be performed.

Our main contributions are as follows.
\begin{enumerate}
	\item We develop Gaussian-quadrature-based Markovian approximations for fractional Volterra kernels in both Hurst regimes. For \(H<\frac12\), we use the Laplace representation of the fractional kernel and obtain a positive finite-exponential approximation. For \(H>\frac12\), where the kernel is no longer completely monotone, we use the \(t e^{-\gamma t}\) representation and introduce a hybrid finite-difference construction that yields a signed finite-exponential approximation with nonnegative rates. We also prove non-asymptotic error bounds under generic quadrature tuning parameters, making the approximation theory compatible with numerical choices.	
	
	\item We derive the augmented reverse-time dynamics and clarify the regime-dependent sampling behavior of the Markovian lift. The augmented forward process is linear Gaussian, which allows us to learn only a data-dimensional residual score and compute the auxiliary Gaussian score analytically. In the smooth regime, the primary coordinate has no direct Brownian forcing or direct score correction. For small lifts, this degeneracy is not numerically harmful because the auxiliary coupling remains non-stiff, and ordinary explicit reverse discretization is stable. For larger lifts, however, the finite-difference rates can span several orders of magnitude, making explicit discretization unstable. We therefore introduce a Gaussian-bridge reconstruction sampler as a stabilizer for stiff smooth-regime lifts.
	
	\item We provide numerical analysis and MNIST experiments demonstrating the practical effect of the proposed Volterra noising mechanism. First, we demonstrate how the Hurst parameter and the lift size affect the final rates and weights, covariance singularity, and reverse-time stability. On MNIST, our method achieves its best performance with FID \(0.52\) when \(H=0.9\) and \(N=2\), outperforming the Brownian SDE baseline, GFDM, and representative MNIST benchmarks. We further provide a preliminary CIFAR-10 experiment under the same small-lift persistent regime, obtaining FID around \(9.5\), which suggests that the Volterra noising mechanism extends beyond grayscale digit generation.
\end{enumerate}

The closest works to ours are \citet{yoon2023score} and \citet{nobis2024generative}, which also modify the driving noise in score-based generative models. \cite{yoon2023score} replace Brownian motion by an isotropic \(\alpha\)-stable L\'evy process, thereby introducing heavy-tailed jumps while preserving independent increments. By contrast, our Volterra perturbation is Gaussian conditional on the data but has memory through a fractional convolution kernel. Hence, the main difficulty is non-Markovian temporal dependence rather than jump-driven heavy tails. \cite{nobis2024generative} is more closely related, since they also use a finite-dimensional Markovian approximation of fractional Brownian noise and an augmented score decomposition. However, the approximation and sampling mechanisms are different. \cite{nobis2024generative} approximate fractional Brownian motion by a linear combination of Ornstein--Uhlenbeck (OU) processes with geometrically spaced rates and coefficients chosen through an \(L^2(\mathbb P)\) projection criterion. In contrast, we approximate the Volterra kernel itself by Gaussian quadrature, which gives explicit error bounds. The smooth regime also suffers from the zero diffusion issue. For small lifts, this can still be handled by explicit reverse discretization. For larger smooth-regime lifts, the finite-difference rates become stiff, and we introduce a Gaussian-bridge reconstruction sampler as a stabilization mechanism.

The rest of the paper is organized as follows. Section~\ref{sec:background} reviews continuous-time score-based generative modeling and introduces the stochastic Volterra forward process. Section~\ref{sec:volterra} develops the Markovian approximation theory and the Gaussian quadrature constructions for \(H<\frac12\) and \(H>\frac12\). Section~\ref{sec:model} derives the augmented forward process, the augmented score-matching objective, and the reverse-time samplers. Section~\ref{sec:numerical} presents the numerical implementation, MNIST generation results, and a preliminary CIFAR-10 study. The appendices contain the proofs and extra details on the algorithm design.

\section{Formulation}
\label{sec:background}

This section introduces the continuous-time forward process used throughout the paper. We first recall the standard score-based diffusion formulation and fix our reverse-time convention. Then, we introduce the stochastic Volterra process, which replaces Brownian perturbations by path-dependent noise with fractional kernels.

\subsection{Score-based Generative Models}

Score-based generative modeling constructs a continuous path of probability distributions that transports the data distribution to a tractable reference distribution. In its standard continuous-time formulation, the forward noising process is an $\mathbb{R}^D$-valued It\^o diffusion
\begin{equation}
	\mathrm{d}\mathbf{X}_t
	=
	\mathbf{f}(\mathbf{X}_t,t)\,\mathrm{d}t
	+
	g(t)\,\mathrm{d}\mathbf{B}_t,
	\qquad
	\mathbf{X}_0\sim p_0,
	\qquad
	t\in[0,T],
	\label{eq:bg_forward_sde}
\end{equation}
where $\mathbf{B}$ is a standard $D$-dimensional Brownian motion, $\mathbf{f}:\mathbb{R}^D\times[0,T]\to\mathbb{R}^D$ is the drift, and $g:[0,T]\to\mathbb{R}_+$ is a scalar diffusion coefficient. We denote by $p_t$ the density of $\mathbf{X}_t$.

Throughout the paper, reverse time is parametrized by an increasing clock. More precisely, we define
\begin{equation}
	\overline{\mathbf{X}}_t := \mathbf{X}_{T-t},
	\qquad
	\overline{p}_t := p_{T-t},
	\qquad
	\overline{g}(t) := g(T-t),
	\qquad
	\overline{\mathbf{f}}(\mathbf{x},t)
	:=
	-\mathbf{f}(\mathbf{x},T-t),
	\label{eq:reverse_convention}
\end{equation}
for $t\in[0,T]$. Thus $\overline{\mathbf{X}}_0=\mathbf{X}_T$ is the terminal noised variable and $\overline{\mathbf{X}}_T=\mathbf{X}_0$ has the data distribution. In this convention, reverse time evolves with $\mathrm{d}t>0$.

Under standard regularity assumptions, the time reversal of \eqref{eq:bg_forward_sde} is again an It\^o diffusion \citep{stratonovich1965conditional,anderson1982reverse,follmer2006time}. With the convention \eqref{eq:reverse_convention}, it is given by
\begin{equation}
	\mathrm{d}\overline{\mathbf{X}}_t
	=
	\left[
	\overline{\mathbf{f}}(\overline{\mathbf{X}}_t,t)
	+
	\overline{g}^{\,2}(t)\,
	\nabla_{\mathbf{x}}\log \overline{p}_t(\overline{\mathbf{X}}_t)
	\right]\mathrm{d}t
	+
	\overline{g}(t)\,\mathrm{d}\overline{\mathbf{B}}_t,
	\qquad
	t\in[0,T],
	\label{eq:bg_reverse_sde}
\end{equation}
where $\overline{\mathbf{B}}$ is a Brownian motion in the reversed filtration. Equivalently,
\[
\nabla_{\mathbf{x}}\log \overline{p}_t(\mathbf{x})
=
\nabla_{\mathbf{x}}\log p_{T-t}(\mathbf{x}).
\]
The unknown object in \eqref{eq:bg_reverse_sde} is the time-dependent score function $\nabla_{\mathbf{x}}\log \overline{p}_t(\mathbf{x})$. Learning this score is the central problem in score-based generative modeling.

The reason score matching is tractable is that one usually chooses the forward process so that the conditional law of $\mathbf{X}_t$ given $\mathbf{X}_0$ is explicit, typically Gaussian. Let $p_{0t}(\cdot\mid\mathbf{x}_0)$ denote this conditional density. Under mild regularity conditions, the marginal score can be expressed as a conditional expectation of the tractable conditional score. Indeed, since
\[
p_t(\mathbf{x})
=
\int p_{0t}(\mathbf{x}\mid \mathbf{x}_0)p_0(\mathbf{x}_0)\,\mathrm{d}\mathbf{x}_0,
\]
differentiating under the integral gives
\begin{equation}
	\nabla_{\mathbf{x}}\log p_t(\mathbf{x})
	=
	\mathbb{E}\left[
	\nabla_{\mathbf{x}}\log p_{0t}(\mathbf{x}\mid\mathbf{X}_0)
	\,\middle|\,
	\mathbf{X}_t=\mathbf{x}
	\right].
	\label{eq:denoising_identity}
\end{equation}
This identity is the basis of denoising score matching \citep{song2020score}. It shows that the marginal score, which is unknown because it depends on the data distribution, can be learned by regressing against the conditional score, which is explicit for a suitably chosen forward process.

Accordingly, we train a time-dependent score model
\(\mathbf{s}_\theta:\mathbb{R}^D\times[0,T]\to\mathbb{R}^D\) by minimizing
\begin{equation}
	\mathcal{L}(\theta)
	=
	\mathbb{E}_{t}
	\mathbb{E}_{\mathbf{X}_0}
	\mathbb{E}_{\mathbf{X}_t\mid\mathbf{X}_0}
	\left[
	\lambda(t)
	\left\|
	\mathbf{s}_\theta(\mathbf{X}_t,t)
	-
	\nabla_{\mathbf{x}}\log p_{0t}(\mathbf{X}_t\mid\mathbf{X}_0)
	\right\|_2^2
	\right],
	\label{eq:bg_dsm}
\end{equation}
where \(\lambda(t)>0\) is a weighting function. Here \(t\) is sampled from a prescribed distribution on \([0,T]\), usually the uniform distribution, \(\mathbf{X}_0\sim p_0\) is a data sample, and \(\mathbf{X}_t\mid\mathbf{X}_0\sim p_{0t}(\cdot\mid\mathbf{X}_0)\) is generated by the forward noising process. For each fixed \(t\), the \(L^2\)-optimal predictor in \eqref{eq:bg_dsm} is
\[
\mathbb{E}\left[
\nabla_{\mathbf{x}}\log p_{0t}(\mathbf{X}_t\mid\mathbf{X}_0)
\,\middle|\,
\mathbf{X}_t=\mathbf{x}
\right],
\]
and hence equals \(\nabla_{\mathbf{x}}\log p_t(\mathbf{x})\) by \eqref{eq:denoising_identity}.

Once trained, $\mathbf{s}_\theta$ is substituted for the score in \eqref{eq:bg_reverse_sde}, and samples are generated by evolving the reverse dynamics from an approximate sample of $p_T$ back to time zero.

Classical diffusion models rely on the Markovian and Brownian structure of \eqref{eq:bg_forward_sde}. This structure is analytically convenient, but it also imposes a memoryless noising mechanism: increments of the driving noise are independent, and the perturbation at time $t$ depends on the past only through the current state. This is restrictive when the desired forward perturbation is intended to encode temporal dependence, multi-scale memory, or fractional-type roughness. This motivates replacing Brownian noising by alternative choices. In this paper, we investigate a Volterra perturbation approach.

\subsection{Stochastic Volterra Equations and Fractional Kernels}

We now introduce the Volterra forward model used in this work. Let $	\rho(t):=e^{-\kappa_{i_\star}t}$ with constant $\kappa_{i_\star}>0$ chosen later. We consider the state-independent stochastic Volterra process
\begin{equation}
	\mathbf{X}_t
	=
	\rho(t)\mathbf{X}_0
	+
	\int_0^t
	G(t-s)\mathbf{b}(s)\,\mathrm{d}s
	+
	\int_0^t
	G(t-s)\boldsymbol{\Sigma}(s)\,\mathrm{d}\mathbf{B}_s,
	\qquad
	t\in[0,T],
	\label{eq:volterra_time}
\end{equation}
where $\mathbf{X}_0\sim p_0$, $G:\mathbb{R}_+\to\mathbb{R} \cup \{+ \infty \}$ is a convolution kernel, $\mathbf{b}:[0,T]\to\mathbb{R}^D$ is a deterministic drift schedule, and $\boldsymbol{\Sigma}:[0,T]\to\mathbb{R}^{D\times D}$ is a deterministic volatility schedule. The function $\rho$ acts as a deterministic signal attenuation schedule that controls the decay of the initial state, whereas the Volterra kernel $G$ controls the temporal dependence of the injected perturbation. In applications, these schedules are chosen so that the terminal distribution is close to a tractable reference distribution.

Conditional on $\mathbf{X}_0$, the process in \eqref{eq:volterra_time} is Gaussian. Indeed,
\begin{equation}
	\mathbf{X}_t\mid \mathbf{X}_0
	\sim
	\mathcal{N}
	\left(
	\rho(t)\mathbf{X}_0+\mathbf{m}_G(t),
	\mathbf{C}_G(t)
	\right),
	\label{eq:volterra_conditional_law}
\end{equation}
where
\begin{equation}
	\mathbf{m}_G(t)
	:=
	\int_0^t
	G(t-s)\mathbf{b}(s)\,\mathrm{d}s,
	\qquad
	\mathbf{C}_G(t)
	:=
	\int_0^t
	G^2(t-s)\boldsymbol{\Sigma}(s)\boldsymbol{\Sigma}(s)^\top
	\,\mathrm{d}s.
	\label{eq:volterra_moments}
\end{equation}
Thus the conditional score appearing in the denoising objective remains explicit, as in the Markovian diffusion setting. The key difference is that the covariance is generated by a path-dependent convolution against the driving Brownian motion, rather than by instantaneous Brownian noise.

The main kernels considered in this paper are fractional kernels. A canonical example is
\begin{equation}
	G_H(t)
	=
	\frac{t^{H-\frac12}}{\Gamma\left(H+\frac12\right)},
	\qquad
	H\in(0,1),
	\label{eq:fractional_kernel}
\end{equation}
which is also related to Riemann--Liouville fractional Brownian motion \citep{mandelbrot1968fractional}. The Hurst parameter $H$ controls the regularity of the perturbation. The case $H=\frac12$ corresponds to the Brownian scaling. When $H<\frac12$, the kernel is singular near the origin and produces rougher perturbations. When $H>\frac12$, the kernel is smoother near the origin and produces more persistent fractional-type dependence. This provides a principled way to interpolate between rough and smooth noising mechanisms.

The additional modeling flexibility comes at a cost. The process \eqref{eq:volterra_time} is generally non-Markovian and non-semimartingale, since the state at time $t$ depends on the full past of the driving noise through the convolution kernel $G$. Consequently, the reverse-time SDE theory used for \eqref{eq:bg_forward_sde} does not apply directly to \eqref{eq:volterra_time}. The central approximation step in this paper is therefore to replace the Volterra kernel by a finite-dimensional exponential approximation. This produces a Markovian lift of the Volterra dynamics in an augmented state space.

%%%%%%%%%%%%%%%%%%%%%%%

\section{Markovian Approximation}
\label{sec:volterra}

This section constructs finite-dimensional Markovian approximations of the Volterra forward process \eqref{eq:volterra_time}. The deterministic signal attenuation term $\rho(t)\mathbf{X}_0$ is kept unchanged; only the convolution kernel is approximated. We work on a fixed time horizon $[0,T]$ and assume that $\mathbf{b}\in L^\infty([0,T];\mathbb{R}^D)$, $\boldsymbol{\Sigma}\in L^\infty([0,T];\mathbb{R}^{D\times D})$, and the kernels under consideration belong to $L^2([0,T])$.

\subsection{Stochastic Volterra Equations and Markovian Lifts}

A classical route to finite-dimensional approximations of Volterra processes is through Markovian lifts, going back to \citet{carmona1998fractional}. The construction starts from an integral representation of the kernel,
\begin{equation}
	G(t)
	=
	\int_0^\infty \phi(t,\gamma)\,\mu(\mathrm{d}\gamma),
	\qquad t>0,
	\label{eq:laplace_rep}
\end{equation}
where $\phi(t,\gamma)$ is a parametric family of functions whose finite linear combinations admit a Markovian realization. If $\mu$ is signed, the corresponding total-variation integral is required to be finite on the time interval under consideration. Discretizing the $\gamma$-integral in \eqref{eq:laplace_rep} then yields a finite-dimensional approximation of the Volterra kernel.

The canonical choice is $\phi(t,\gamma)=e^{-\gamma t}$. By Bernstein's theorem \cite[Theorem~1.4]{schilling2012bernstein}, \(G\) is completely monotone on \((0,\infty)\) if and only if there exists a nonnegative Borel measure \(\mu\) on \([0,\infty)\) such that
\[
G(t)
=
\int_{[0,\infty)} e^{-\gamma t}\,\mu(\mathrm{d}\gamma),
\qquad t>0,
\]
with $\int_{[0,\infty)} e^{-\gamma t}\,\mu(\mathrm{d}\gamma)<\infty$ for all $t > 0$.
Here, complete monotonicity means that \(G\in C^\infty(0,\infty)\) and $(-1)^nG^{(n)}(t)\ge0$ for all $n\ge0$ and $t>0$. In this case, finite quadrature approximations of $\mu$ give finite exponential approximations of the kernel. The following proposition records the corresponding Markovian lift in the additive, state-independent setting of \eqref{eq:volterra_time}. It is the specialization of the Markovian-lift construction used for stochastic Volterra equations in \citet[Proposition~2.1]{alfonsi2024approximation} and \citet[Proposition~1.1]{bayer2023markovian}.

\begin{proposition}[Finite-dimensional Markovian lift]
	\label{prop:markovian_lift}
	Let $\mathcal{I}$ be a finite index set and let
	\begin{equation}
		\widehat{G}(t)
		=
		\sum_{i\in\mathcal I} \omega_i e^{-\lambda_i t},
		\qquad
		\lambda_i\in\mathbb{R},\quad \omega_i\in\mathbb{R}.
		\label{eq:finite_exp_kernel}
	\end{equation}
	For each $i \in \mathcal I$, let $\mathbf{Y}^i$ solve
	\begin{equation}
		\mathrm{d}\mathbf{Y}_t^i
		=
		-\lambda_i\mathbf{Y}_t^i\,\mathrm{d}t
		+
		\mathbf{b}(t)\,\mathrm{d}t
		+
		\boldsymbol{\Sigma}(t)\,\mathrm{d}\mathbf{B}_t,
		\qquad
		\mathbf{Y}_0^i=\mathbf{0}_D .
		\label{eq:linear_factor_sde}
	\end{equation}
	Then the process
	\begin{equation}
		\widehat{\mathbf{X}}_t
		=
		\rho(t)\mathbf{X}_0
		+
		\sum_{i\in \mathcal I}\omega_i\mathbf{Y}_t^i
		\label{eq:markovian_lift_representation}
	\end{equation}
	satisfies
	\begin{equation}
		\widehat{\mathbf{X}}_t
		=
		\rho(t)\mathbf{X}_0
		+
		\int_0^t \widehat{G}(t-s)\mathbf{b}(s)\,\mathrm{d}s
		+
		\int_0^t \widehat{G}(t-s)\boldsymbol{\Sigma}(s)\,\mathrm{d}\mathbf{B}_s .
		\label{eq:kernel_approx_process}
	\end{equation}
\end{proposition}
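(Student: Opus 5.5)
The plan is to solve each linear factor equation \eqref{eq:linear_factor_sde} explicitly by variation of constants and then sum the resulting solutions with the weights $\omega_i$. Fix $i\in\mathcal I$. The SDE has deterministic, bounded coefficients $\mathbf{b}\in L^\infty$ and $\boldsymbol{\Sigma}\in L^\infty$, and its drift is linear. It therefore has a unique strong solution on $[0,T]$, and we will identify that solution. Consider the process $\mathbf{Z}^i_t:=e^{\lambda_i t}\mathbf{Y}^i_t$. Since $e^{\lambda_i t}$ is deterministic and $C^1$, It\^o's product rule has no covariation term, and
\[
\mathrm{d}\mathbf{Z}^i_t=\lambda_i e^{\lambda_i t}\mathbf{Y}^i_t\,\mathrm{d}t+e^{\lambda_i t}\,\mathrm{d}\mathbf{Y}^i_t
= e^{\lambda_i t}\mathbf{b}(t)\,\mathrm{d}t+e^{\lambda_i t}\boldsymbol{\Sigma}(t)\,\mathrm{d}\mathbf{B}_t .
\]
Integrating from $0$ to $t$ with $\mathbf{Z}^i_0=\mathbf{0}_D$ and multiplying by $e^{-\lambda_i t}$ gives
\[
\mathbf{Y}^i_t=\int_0^t e^{-\lambda_i(t-s)}\mathbf{b}(s)\,\mathrm{d}s+\int_0^t e^{-\lambda_i(t-s)}\boldsymbol{\Sigma}(s)\,\mathrm{d}\mathbf{B}_s .
\]
Alternatively, one can take this formula as a candidate and check directly that it solves \eqref{eq:linear_factor_sde}. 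Uniqueness then follows from the Lipschitz drift.

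Some care is needed with the stochastic integral. Writing $e^{-\lambda_i(t-s)}=e^{-\lambda_i t}e^{\lambda_i s}$ shows that, for fixed $t$, the integrand is a deterministic function of $s$. It is square-integrable on $[0,t]$ because $\boldsymbol{\Sigma}$ is essentially bounded and the exponential is bounded on the compact interval. This holds for every $\lambda_i\in\mathbb{R}$, including negative values. Consequently, the Wiener integrals are well defined and the manipulation above is legitimate.

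The final step is linearity. Multiply the identity for $\mathbf{Y}^i_t$ by $\omega_i$, sum over the finite index set $\mathcal I$, and add $\rho(t)\mathbf{X}_0$. Both the Lebesgue integral and the stochastic integral are linear in the integrand, so finite sums pass inside them. The sum $\sum_i\omega_i e^{-\lambda_i(t-s)}$ is exactly $\widehat G(t-s)$ by \eqref{eq:finite_exp_kernel}. This yields \eqref{eq:kernel_approx_process} almost surely for each $t$. Since both sides have continuous versions, they are indistinguishable.

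I expect no real obstacle in this argument. The only points needing attention are the justification of the product rule with a deterministic factor, which rules out any cross-variation term, and the observation that the same Brownian motion $\mathbf{B}$ drives every factor. The latter is what allows the stochastic integrals to be combined into a single integral against $\widehat G$.
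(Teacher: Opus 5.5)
Your proposal is correct and follows the paper's argument. The paper likewise writes each factor in its variation-of-constants form $\mathbf{Y}_t^i=\int_0^t e^{-\lambda_i(t-s)}\mathbf{b}(s)\,\mathrm{d}s+\int_0^t e^{-\lambda_i(t-s)}\boldsymbol{\Sigma}(s)\,\mathrm{d}\mathbf{B}_s$, substitutes it into the weighted sum, and uses linearity of both integrals to collect $\widehat G(t-s)$; your remarks on the product rule, square-integrability for negative $\lambda_i$, and the shared Brownian motion spell out details the paper leaves implicit.
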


The proof follows by writing the integral form
\[
\mathbf{Y}_t^i
=
\int_0^t e^{-\lambda_i(t-s)}\mathbf{b}(s)\,\mathrm{d}s
+
\int_0^t e^{-\lambda_i(t-s)}\boldsymbol{\Sigma}(s)\,\mathrm{d}\mathbf{B}_s
\]
and substituting it into \eqref{eq:markovian_lift_representation}. If all $\lambda_i\ge0$, the factors are OU-type mean-reverting factors. If some $\lambda_i<0$, the factors remain well defined on the finite horizon $[0,T]$, but they are no longer mean reverting.

The approximation error is controlled by the $L^2$ error of the kernels. With $\mathbf{X}$ in \eqref{eq:volterra_time} and $\widehat{\mathbf{X}}$ in \eqref{eq:kernel_approx_process}, for every $t\in[0,T]$, we have
\begin{equation}
	\mathbb{E}
	\left[
	\left\|
	\mathbf{X}_t-\widehat{\mathbf{X}}_t
	\right\|^2
	\right]
	\le
	C_T
	\int_0^t
	\left|G(s)-\widehat{G}(s)\right|^2\,\mathrm{d}s,
	\label{eq:kernel_error_to_process}
\end{equation}
where one may take
\[
C_T
=
2T\|\mathbf{b}\|_{L^\infty(0,T)}^2
+
2\|\boldsymbol{\Sigma}\|_{L^\infty(0,T)}^2 .
\]
This follows from Cauchy's inequality for the drift term and It\^o's isometry for the stochastic integral. Hence the construction of a Markovian approximation reduces to the construction of an accurate finite-exponential approximation of the kernel.

\subsection{Gaussian Quadrature for \texorpdfstring{$H\in(0,\frac12)$}{H in (0,1/2)}}

We first consider the rough regime. For $H\in(0,\frac12)$, the normalized fractional kernel \eqref{eq:fractional_kernel} admits the positive Laplace representation
\begin{equation}
	G_H(t)
	=
	c_H
	\int_0^\infty e^{-\gamma t}\gamma^{-H-\frac12}\,\mathrm{d}\gamma,
	\qquad
	c_H
	:=
	\frac{1}{\Gamma(H+\frac12)\Gamma(\frac12-H)}.
	\label{eq:fractional_kernel_laplace_rough}
\end{equation}
This representation allows us to apply Gaussian quadrature with respect to the positive weight $w_H(\gamma)=c_H\gamma^{-H-\frac12}$.

Let $N\in\mathbb{N}$ denote the quadrature budget and let $\alpha,\beta,a,b>0$ be tuning parameters. Set
\[
A_-
:=
\left(
\frac{1}{H}
+
\frac{1}{\frac32-H}
\right)^{1/2}.
\]
Choose integers $m=m_N$ and $n=n_N$ such that
\[
m
\approx
\frac{\beta}{A_-}\sqrt{N},
\qquad
n
\approx
\frac{A_-}{\beta}\sqrt{N},
\qquad
mn\approx N,
\]
where $\approx$ denotes the rounding convention used to make $m$ and $n$ integer-valued. Here, $m$ is the level of the Gaussian quadrature rule applied to each individual subinterval. This means there are $m$ nodes generated for each slice of the domain. Define
\[
\xi_0
:=
a\exp\!\left(
-\frac{\alpha}{(\frac32-H)A_-}\sqrt{N}
\right),
\qquad
\xi_n
:=
b\exp\!\left(
\frac{\alpha}{HA_-}\sqrt{N}
\right),
\]
and
\[
\xi_j
:=
\xi_0
\left(
\frac{\xi_n}{\xi_0}
\right)^{j/n},
\qquad
j=0,\ldots,n .
\]
The quantities $\xi_j$ are interval endpoints. On each interval $[\xi_j,\xi_{j+1}]$, $j=0,\ldots,n-1$, apply an $m$-point Gaussian quadrature rule with respect to $w_H$. This produces local nodes and weights, which we relabel globally as $(\gamma_i,\omega_i)_{i=1}^{mn}$. Finally, add the zero node
\[
\gamma_0:=0,
\qquad
\omega_0
:=
c_H\int_0^{\xi_0}\gamma^{-H-\frac12}\,\mathrm{d}\gamma
=
\frac{c_H}{\frac12-H}\xi_0^{\frac12-H}.
\]
The resulting approximation is
\begin{equation}
	\widehat{G}_H(t)
	=
	\sum_{i=0}^{mn}\omega_i e^{-\gamma_i t}.
	\label{eq:Ghat_smallH}
\end{equation}
All weights are nonnegative, so $\widehat{G}_H$ is completely monotone and can be inserted directly into Proposition~\ref{prop:markovian_lift} with nonnegative rates.

The following theorem gives a non-asymptotic error estimate for generic tuning parameters. Define the geometric ratio
\[
r_{-,N}
:=
\left(
\frac{\xi_n}{\xi_0}
\right)^{1/n}.
\]

\begin{theorem}[Generic rough-regime quadrature bound]
	\label{thm:rough_quadrature_generic}
	Let $H\in(0,\frac12)$, let $\mathbf{X}$ solve \eqref{eq:volterra_time} with $G=G_H$, and let $\widehat{\mathbf{X}}$ be defined by \eqref{eq:kernel_approx_process} with $\widehat{G}=\widehat{G}_H$. Then
	\begin{align}
		\mathbb{E}
		\left[
		\left\|
		\mathbf{X}_T-\widehat{\mathbf{X}}_T
		\right\|^2
		\right]
		&\le
		C_T c_H^2
		\bigg[
		\frac{T^3}{(\frac32-H)^2}
		a^{3-2H}
		\exp\!\left(-\frac{2\alpha}{A_-}\sqrt{N}\right)
		\notag\\
		&\qquad \qquad
		+
		\frac{3}{2H^2}
		b^{-2H}
		\exp\!\left(-\frac{2\alpha}{A_-}\sqrt{N}\right)
		\notag\\
		&\qquad \qquad
		+
		\frac{5\pi^3}{12}
		\frac{T^{2H}}{H}
		\frac{n^2}{m^{2H}}
		\left(
		\frac{r_{-,N}-1}{2}
		\right)^{4m+2}
		\bigg].
		\label{eq:rough_generic_error}
	\end{align}
\end{theorem}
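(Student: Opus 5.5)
By \eqref{eq:kernel_error_to_process} with $t=T$, it suffices to bound $\int_0^T |G_H(s)-\widehat G_H(s)|^2\,\mathrm{d}s$ by $c_H^2$ times the bracket in \eqref{eq:rough_generic_error}. Using the Laplace representation \eqref{eq:fractional_kernel_laplace_rough}, I will split the kernel error pointwise in $s>0$ into three pieces:
\begin{align*}
G_H(s)-\widehat G_H(s)
&= \underbrace{c_H\int_0^{\xi_0}\bigl(e^{-\gamma s}-1\bigr)\gamma^{-H-\frac12}\,\mathrm{d}\gamma}_{E_0(s)}
+ \underbrace{c_H\int_{\xi_n}^\infty e^{-\gamma s}\gamma^{-H-\frac12}\,\mathrm{d}\gamma}_{E_\infty(s)}\\
&\quad + \underbrace{\sum_{j=0}^{n-1}\Bigl(c_H\int_{\xi_j}^{\xi_{j+1}}e^{-\gamma s}\gamma^{-H-\frac12}\,\mathrm{d}\gamma-Q_j[e^{-\cdot\, s}]\Bigr)}_{E_Q(s)}.
\end{align*}
Here $Q_j$ is the $m$-point Gauss rule on $[\xi_j,\xi_{j+1}]$ for the weight $w_H$. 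I will then apply $(x+y+z)^2\le 3(x^2+y^2+z^2)$, or a weighted variant, to the three terms. The constants in the statement should come out of this bookkeeping, so I will adjust the splitting constant as needed.

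For the low-frequency term, $|e^{-\gamma s}-1|\le\gamma s$ gives $|E_0(s)|\le c_H s\,\xi_0^{3/2-H}/(\frac32-H)$. Squaring and integrating over $[0,T]$ gives order $T^3\xi_0^{3-2H}/(\frac32-H)^2$. Since $\xi_0^{3-2H}=a^{3-2H}\exp(-2\alpha\sqrt N/A_-)$, this matches the first term up to the absolute constant.

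For the high-frequency term, I will not bound $E_\infty(s)$ pointwise. Instead I will compute its $L^2(0,\infty)$ norm directly:
\[
\int_0^\infty E_\infty(s)^2\,\mathrm{d}s=c_H^2\int_{\xi_n}^\infty\!\!\int_{\xi_n}^\infty\frac{(\gamma\gamma')^{-H-\frac12}}{\gamma+\gamma'}\,\mathrm{d}\gamma\,\mathrm{d}\gamma'.
\]
Using $\gamma+\gamma'\ge 2\sqrt{\gamma\gamma'}$, or a direct scaling argument, this is at most a constant times $\xi_n^{-2H}/H^2$. Since $\xi_n^{-2H}=b^{-2H}\exp(-2\alpha\sqrt N/A_-)$, this gives the second term.

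The main obstacle is the Gauss error $E_Q$. On each interval $[\xi_j,\xi_{j+1}]$ I will use the standard remainder bound for Gaussian quadrature: the error is at most $\frac{2}{(2m)!}\bigl(\frac{\xi_{j+1}-\xi_j}{2}\bigr)^{2m}\sup|\partial_\gamma^{2m}e^{-\gamma s}|\int w_H$. This follows from best polynomial approximation of degree $2m-1$, or from the Hermite interpolation remainder. Here $\sup|\partial_\gamma^{2m}e^{-\gamma s}|=s^{2m}e^{-\xi_j s}$ and $\int_{\xi_j}^{\xi_{j+1}}w_H\lesssim c_H\xi_j^{-H-\frac12}(\xi_{j+1}-\xi_j)$. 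Writing $\xi_{j+1}-\xi_j=\xi_j(r_{-,N}-1)$ and bounding $\sup_s (s\xi_j)^{2m}e^{-\xi_j s}$, I integrate $s^{4m}e^{-2\xi_j s}$ in $s$ over $(0,\infty)$. This produces a factor $\Gamma(4m+1)\,\xi_j^{-4m-1}$. Combined with the prefactor $\xi_j^{4m}\cdot\xi_j^{1-2H}\cdot(r-1)^{2m+\ldots}$, this leaves $\xi_j^{-2H}$, which I then bound through the $T$-restriction: on $[0,T]$ one can instead keep $T^{2H}$ by interpolating the bound. The ratio $\Gamma(4m+1)/((2m)!)^2\sim 2^{4m}/\sqrt{m}$ is what turns $(r-1)^{4m+2}$ into $((r-1)/2)^{4m+2}$, and Stirling yields the $m^{-2H}$ and $\pi$-factors.

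Finally, Cauchy--Schwarz over the $n$ intervals, $(\sum_j e_j)^2\le n\sum_j e_j^2$, together with bounding each of the $n$ summands uniformly, gives the factor $n^2$. I expect that tracking the exact powers ($T^{2H}/H$, $m^{-2H}$, $5\pi^3/12$) will need the most care. If a term fails to match, I would first rebalance by splitting the $s$-integral at $s\sim 1/\xi_j$ and using the trivial bound $|E_Q^{(j)}(s)|\le 2\int_{\xi_j}^{\xi_{j+1}}w_H e^{-\xi_j s}$ for large $s$.
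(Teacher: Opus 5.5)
Your reduction via \eqref{eq:kernel_error_to_process}, the three-way split with the factor $3$, and both tail estimates coincide with the paper's proof. They reproduce the first two terms exactly: $3\cdot T^3/3$ for the left tail, and $3\cdot\frac{1}{2H^2}$ for the right tail from $\gamma+\gamma'\ge 2\sqrt{\gamma\gamma'}$.

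The gap is in the middle term. As written, you square the per-interval bound and integrate $s^{4m}e^{-2\xi_j s}$ over $(0,\infty)$. This produces $(4m)!/(2\xi_j)^{4m+1}$, so each interval contributes something of order $c_H^2m^{-1/2}\bigl(\frac{r_{-,N}-1}{2}\bigr)^{4m+2}\xi_j^{-2H}$. That is the wrong shape in three ways:
\begin{itemize}
\item the power is $m^{-1/2}$ rather than $m^{-2H}$;
\item there is no $T^{2H}$;
\item bounding uniformly in $j$ gives $\xi_0^{-2H}=a^{-2H}\exp\bigl(\frac{2H\alpha}{(3/2-H)A_-}\sqrt{N}\bigr)$, which grows with $N$ and is not controlled by anything in \eqref{eq:rough_generic_error}.
\end{itemize}
Your bookkeeping is also off. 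The $\frac12$ inside $\bigl(\frac{r_{-,N}-1}{2}\bigr)^{4m+2}$ already comes from the factor $\bigl(\frac{v-u}{2}\bigr)^{2m}$ in your single-interval estimate. The ratio $\Gamma(4m+1)/((2m)!)^2\sim 2^{4m}/\sqrt{m}$ is simply cancelled by the $2^{-(4m+1)}$ in $(2\xi_j)^{-(4m+1)}$. Your fallback (the trivial bound for $s\gtrsim 1/\xi_j$) discards the factor $(r_{-,N}-1)^{2m+1}2^{-2m}$ entirely. Finally, ``interpolating the bound'' is left unspecified, and it is precisely the missing step.

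What is needed is a supremum taken pointwise in $s$ before integrating. This makes the per-interval bound independent of $j$:
\[
\xi_j^{2m+\frac12-H}s^{2m}e^{-\xi_j s}
= s^{H-\frac12}(\xi_j s)^{\eta}e^{-\xi_j s}
\le \Bigl(\frac{\eta}{e}\Bigr)^{\eta}s^{H-\frac12},
\qquad \eta:=2m+\frac12-H .
\]
Stirling's bound $(2m)!\ge\sqrt{4\pi m}\,(2m/e)^{2m}$ then gives $\frac{1}{(2m)!}(\eta/e)^{\eta}\le \eta^{\frac12-H}/\sqrt{4\pi m}\le \sqrt{5/2}\,m^{-H}/(2\sqrt{\pi})$; this is where $m^{-H}$ comes from. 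Summing over $j$ gives the factor $n$, and once the bound is uniform in $j$ your $n^2$ step is fine. Squaring and using $\int_0^T s^{2H-1}\,\mathrm{d}s=T^{2H}/(2H)$ yields the shape $\frac{T^{2H}}{H}\frac{n^2}{m^{2H}}\bigl(\frac{r_{-,N}-1}{2}\bigr)^{4m+2}$.

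This is exactly what the paper does. It imports the single-interval estimate from Bayer and Breneis (Lemmas~2.8--2.9), whose constant $\pi^2/3$ comes from the Peano kernel and Lehmer's bound on Bernoulli functions. Your single-interval estimate is $|E|\le 2\|f-p\|_\infty\int w_H$, with $p$ the degree-$(2m-1)$ Taylor polynomial at the midpoint. It is valid, more elementary, and has constant $2<\pi^2/3$. Combined with the fix above, it gives the middle term with $15/\pi$ in place of $5\pi^3/12$, so it implies the theorem.
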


The bound is valid for arbitrary positive $\alpha,\beta,a,b$. To turn it into a convergence statement, the middle term must also decay. For fixed $a,b$ and the unrounded choices of $m$ and $n$, one has $r_{-,N}\to e^{\alpha\beta}$; hence a simple sufficient condition for decay of the displayed Peano-kernel bound is $\alpha\beta<\log 3$. The commonly used calibrated values of $\alpha,\beta,a,b$ may perform better in practice than this sufficient condition suggests, but then the above bound should be interpreted as a non-asymptotic upper bound rather than as an optimized convergence estimate.

\subsection{Markovian Approximation for \texorpdfstring{$H\in(\frac12,1)$}{H in (1/2,1)}}
\label{subsec:case_H_large}

For $H\in(\frac12,1)$, the normalized fractional kernel \eqref{eq:fractional_kernel} is no longer completely monotone. Instead, for all $t>0$,
\begin{equation}
	G_H(t)
	=
	c_H
	\int_0^\infty
	t e^{-\gamma t}\gamma^{\frac12-H}\,\mathrm{d}\gamma,
	\qquad
	c_H
	:=
	\frac{1}{\Gamma(H+\frac12)\Gamma(\frac32-H)} .
	\label{eq:fractional_kernel_largeH}
\end{equation}
The additional factor $t$ prevents a direct application of Proposition~\ref{prop:markovian_lift}. We therefore first approximate the integral in \eqref{eq:fractional_kernel_largeH} by Gaussian quadrature and then replace each basis function $t e^{-\gamma t}$ by a finite difference of exponentials.

Set
\[
A_+
:=
\left(
\frac{1}{H}
+
\frac{1}{\frac52-H}
\right)^{1/2}.
\]
Let $N\in\mathbb{N}$ denote the quadrature budget, and let $\alpha,\beta,a,b>0$ be tuning parameters. Choose integers $m=m_N$ and $n=n_N$ such that
\[
m
\approx
\frac{\beta}{A_+}\sqrt{N},
\qquad
n
\approx
\frac{A_+}{\beta}\sqrt{N},
\qquad
mn\approx N,
\]
where $\approx$ denotes the rounding convention used to make $m$ and $n$ integer-valued. Define
\[
\xi_0
:=
a\exp\!\left(
-\frac{\alpha}{(\frac52-H)A_+}\sqrt{N}
\right),
\qquad
\xi_n
:=
b\exp\!\left(
\frac{\alpha}{HA_+}\sqrt{N}
\right),
\]
and
\[
\xi_j
:=
\xi_0
\left(
\frac{\xi_n}{\xi_0}
\right)^{j/n},
\qquad
j=0,\ldots,n .
\]
On each interval $[\xi_j,\xi_{j+1}]$, $j=0,\ldots,n-1$, apply an $m$-point Gaussian quadrature rule with respect to the positive weight
\[
w_H(\gamma)
=
c_H\gamma^{\frac12-H}.
\]
After relabelling the resulting $mn$ positive nodes and weights as $(\gamma_i,\omega_i)_{i=1}^{mn}$, add the zero node
\[
\gamma_0:=0,
\qquad
\omega_0
:=
c_H\int_0^{\xi_0}\gamma^{\frac12-H}\,\mathrm{d}\gamma
=
\frac{c_H}{\frac32-H}\xi_0^{\frac32-H}.
\]
This gives the intermediate approximation
\begin{equation}
	\overline{G}_H(t)
	:=
	\sum_{i=0}^{mn}\omega_i\,t e^{-\gamma_i t}.
	\label{eq:Gbar_largeH}
\end{equation}

We now convert \eqref{eq:Gbar_largeH} into a finite sum of exponentials. The zero node is treated by a one-sided difference, while the positive nodes are treated by a two-sided difference. Fix $\delta_1,\delta_2>0$ and assume
\begin{equation}
	\delta_2\le \xi_0 .
	\label{eq:delta2_condition}
\end{equation}
Since the positive Gaussian nodes satisfy $\gamma_i\ge\xi_0$ for $i\ge1$, condition \eqref{eq:delta2_condition} ensures that all left-shifted rates $\gamma_i-\delta_2$ are nonnegative. Define
\begin{equation}
	\widetilde{G}_H(t)
	:=
	\omega_0
	\frac{1-e^{-\delta_1 t}}{\delta_1}
	+
	\sum_{i=1}^{mn}
	\omega_i
	\frac{
		e^{-(\gamma_i-\delta_2)t}
		-
		e^{-(\gamma_i+\delta_1)t}
	}{
		\delta_1+\delta_2
	} .
	\label{eq:Gtilde_largeH_hybrid}
\end{equation}
Then $\widetilde{G}_H$ is a finite linear combination of exponentials:
\[
\widetilde{G}_H(t)
=
\frac{\omega_0}{\delta_1}e^{-0\cdot t}
-
\frac{\omega_0}{\delta_1}e^{-\delta_1 t}
+
\sum_{i=1}^{mn}
\frac{\omega_i}{\delta_1+\delta_2}e^{-(\gamma_i-\delta_2)t}
-
\sum_{i=1}^{mn}
\frac{\omega_i}{\delta_1+\delta_2}e^{-(\gamma_i+\delta_1)t}.
\]
Thus the approximation can be inserted into Proposition~\ref{prop:markovian_lift}. The resulting exponential weights are signed, but all rates are nonnegative under \eqref{eq:delta2_condition}.  Since $\xi_0$ decreases with $N$, $\delta_2$ needs to be small to satisfy \eqref{eq:delta2_condition}.

Let $\widehat{\mathbf{X}}$ denote the Volterra approximation obtained by replacing $G$ by $\widetilde{G}_H$ in \eqref{eq:volterra_time}:
\begin{equation}
	\widehat{\mathbf{X}}_t
	=
	\rho(t)\mathbf{X}_0
	+
	\int_0^t \widetilde{G}_H(t-s)\mathbf{b}(s)\,\mathrm{d}s
	+
	\int_0^t \widetilde{G}_H(t-s)\boldsymbol{\Sigma}(s)\,\mathrm{d}\mathbf{B}_s .
	\label{eq:approx_volterra_largeH}
\end{equation}
Define the geometric ratio
\[
r_{+,N}
:=
\left(
\frac{\xi_n}{\xi_0}
\right)^{1/n},
\]
and set
\[
W_N^+
:=
\sum_{i=1}^{mn}\omega_i
=
\frac{c_H}{\frac32-H}
\left(
\xi_n^{\frac32-H}-\xi_0^{\frac32-H}
\right).
\]
The equality follows because Gaussian quadrature integrates constants exactly on each middle interval.

\begin{theorem}[Generic smooth-regime quadrature bound]
	\label{thm:smooth_hybrid_generic}
	Let $H\in(\frac12,1)$, let $\mathbf{X}$ solve \eqref{eq:volterra_time} with $G=G_H$, and let $\widehat{\mathbf{X}}$ be defined by \eqref{eq:approx_volterra_largeH}. Suppose that the quadrature construction above is used with arbitrary $\alpha,\beta,a,b>0$ and that \eqref{eq:delta2_condition} holds. Define
	\begin{align}
		Q_{N,T}^{+}
		:=
		c_H^2
		\bigg[
		&\frac{3T^5}{5(\frac52-H)^2}
		a^{5-2H}
		\exp\!\left(-\frac{2\alpha}{A_+}\sqrt{N}\right)
		+
		\frac{3}{4H^2}
		b^{-2H}
		\exp\!\left(-\frac{2\alpha}{A_+}\sqrt{N}\right)
		\notag\\
		&\qquad
		+
		\frac{49\pi^3}{24}
		\frac{T^{2H}}{H}
		n^2m^{2-2H}
		\left(
		\frac{r_{+,N}-1}{2}
		\right)^{4m+2}
		\bigg],
		\label{eq:Q_plus_generic}
	\end{align}
	and
	\begin{equation}
		D_N
		:=
		\frac{\omega_0\delta_1}{2}
		+
		W_N^+\max\{\delta_1,\delta_2\}.
		\label{eq:DN_hybrid}
	\end{equation}
	Then the kernel approximation satisfies
	\begin{equation}
		\int_0^T
		\left|
		G_H(t)-\widetilde{G}_H(t)
		\right|^2
		\,\mathrm{d}t
		\le
		2Q_{N,T}^{+}
		+
		\frac{2T^5}{5}D_N^2 .
		\label{eq:kernel_error_largeH_hybrid}
	\end{equation}
	Consequently,
	\begin{equation}
		\mathbb{E}
		\left[
		\left\|
		\mathbf{X}_T-\widehat{\mathbf{X}}_T
		\right\|^2
		\right]
		\le
		C_T
		\left(
		2Q_{N,T}^{+}
		+
		\frac{2T^5}{5}D_N^2
		\right),
		\label{eq:process_error_largeH_hybrid}
	\end{equation}
	where $C_T$ is the constant in \eqref{eq:kernel_error_to_process}.
\end{theorem}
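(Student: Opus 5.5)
The plan is to split the kernel error through the intermediate approximation $\overline{G}_H$ of \eqref{eq:Gbar_largeH} and use $(x+y)^2\le 2x^2+2y^2$:
\[
\int_0^T|G_H-\widetilde G_H|^2\,\mathrm dt\le 2\int_0^T|G_H-\overline G_H|^2\,\mathrm dt+2\int_0^T|\overline G_H-\widetilde G_H|^2\,\mathrm dt .
\]
The first integral should be bounded by $Q^+_{N,T}$ and the second by $\tfrac{T^5}{5}D_N^2$. The process bound \eqref{eq:process_error_largeH_hybrid} then follows immediately from \eqref{eq:kernel_error_to_process} with $t=T$.

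\textbf{Quadrature part.} From \eqref{eq:fractional_kernel_largeH}, $G_H(t)-\overline G_H(t)=t\,E(t)$. The error $E(t)$ splits into three pieces.
\begin{enumerate}
\item The head error on $[0,\xi_0]$ is $c_H\int_0^{\xi_0}(e^{-\gamma t}-1)\gamma^{1/2-H}\mathrm d\gamma$. Using $|1-e^{-\gamma t}|\le\gamma t$, it is at most $c_H t\,\xi_0^{5/2-H}/(\tfrac52-H)$.
\item The tail error on $[\xi_n,\infty)$ is at most $c_H\int_{\xi_n}^\infty e^{-\gamma t}\gamma^{1/2-H}\mathrm d\gamma$.
\item The middle error consists of the $n$ local $m$-point Gaussian quadrature errors on $[\xi_j,\xi_{j+1}]$.
\end{enumerate}
Squaring $tE(t)$, using $(x+y+z)^2\le3(x^2+y^2+z^2)$ and integrating over $[0,T]$ produces the three summands.
\begin{enumerate}
\item The head term gives $\int_0^T t^4\,\mathrm dt=T^5/5$ times $\xi_0^{5-2H}$. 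Since $\xi_0^{5-2H}=a^{5-2H}e^{-2\alpha\sqrt N/A_+}$ by the definition of $\xi_0$, this matches the first summand.
\item For the tail term, substitute $u=\gamma t$ and bound $\int_0^\infty t^2\big(\int_{\xi_n}^\infty e^{-\gamma t}\gamma^{1/2-H}\mathrm d\gamma\big)^2\mathrm dt$. One route is Cauchy--Schwarz with weight $\gamma^{-1-2H}$ (or a Hardy-type inequality). This should give an order $\xi_n^{-2H}/H^2=b^{-2H}e^{-2\alpha\sqrt N/A_+}/H^2$, matching the second summand. The constants $3/4$ and $3/5$ are to be tracked, but the exponents are forced.
\end{enumerate}

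\textbf{Middle term (main obstacle).} This is the step I expect to be hardest. I would mirror the Peano-kernel argument behind Theorem~\ref{thm:rough_quadrature_generic}. For fixed $t$, the $m$-point Gaussian error on an interval of length $h_j$ for $f(\gamma)=e^{-\gamma t}$ is bounded by
\[
\frac{(m!)^4}{(2m+1)((2m)!)^3}\,h_j^{2m+1}\,t^{2m}\int w_H\,e^{-\xi_j t},
\]
and Stirling then gives a factor $\pi\,4^{-2m}$. With $h_j=\xi_j(r-1)$, the ratio $h_j/\xi_j$ appears as $((r-1)/2)^{2m}$ once $4^{-2m}$ is absorbed. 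The $t^{2m}e^{-\xi_jt}$ factor is controlled by maximizing over $t$, giving $(2m/(e\xi_j))^{2m}$, which combines with $\xi_j^{2m}$ from $h_j^{2m}$. Summing over $j$ with Cauchy--Schwarz produces $n^2$. The extra factor $t^2$ and the weight $\gamma^{1/2-H}$, compared with the rough case, should shift the power of $m$ from $m^{-2H}$ to $m^{2-2H}$. Finally, integrating in $t$ gives the factor $T^{2H}/H$.

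\textbf{Finite-difference part.} Write $\overline G_H-\widetilde G_H$ termwise and bound each term pointwise.
\begin{enumerate}
\item For the zero node, $t-\frac{1-e^{-\delta_1 t}}{\delta_1}\in[0,\delta_1t^2/2]$ by Taylor's theorem.
\item For $i\ge1$, write $\frac{e^{-(\gamma-\delta_2)t}-e^{-(\gamma+\delta_1)t}}{\delta_1+\delta_2}=\frac{1}{\delta_1+\delta_2}\int_{-\delta_2}^{\delta_1}te^{-(\gamma+s)t}\,\mathrm ds$. Its difference from $te^{-\gamma t}$ is an average of $t(e^{-(\gamma+s)t}-e^{-\gamma t})$. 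Bounding this by $t\cdot|s|t\,e^{\delta_2 t}$ is too crude, since it introduces $e^{\delta_2 T}$.
\item Instead, I would use $|e^{-(\gamma+s)t}-e^{-\gamma t}|\le |s|t\max(e^{-(\gamma+s)t},e^{-\gamma t})\le|s|t$, which holds because $\gamma+s\ge\gamma-\delta_2\ge0$ by \eqref{eq:delta2_condition}. This gives the bound $\max\{\delta_1,\delta_2\}\,t^2$ per term.
\end{enumerate}
Summing over nodes yields $|\overline G_H-\widetilde G_H|(t)\le D_N t^2$, so the second integral is at most $D_N^2\,T^5/5$.
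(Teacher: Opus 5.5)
Your overall architecture matches the paper's. You split through $\overline G_H$ with $(x+y)^2\le 2x^2+2y^2$, decompose into head, middle and tail with the factor $3$, and then do the finite-difference estimate. Your head term is correct. Your averaged-integral treatment of the positive nodes, using $\gamma_i+s\ge 0$, is equivalent to the paper's mean-value step.

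\textbf{The gap is in the middle term, where you expected it.} Your single-interval constant $\frac{(m!)^4}{(2m+1)((2m)!)^3}$ is legitimate, by the extremal property of the monic orthogonal polynomial. In fact it is slightly sharper than the paper's Peano--Bernoulli constant. This holds provided the weight enters as $\sup_{[\xi_j,\xi_{j+1}]}w_H=c_H\xi_j^{\frac12-H}$ rather than $\int w_H$.

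The failing step is maximizing $t^{2m}e^{-\xi_j t}$ over $t$, which cancels only $\xi_j^{2m}$. But $h_j^{2m+1}\sup w_H=c_H\,\xi_j^{2m+\frac32-H}(r_{+,N}-1)^{2m+1}$. So what survives on interval $j$, including the outer factor $t$, is of order $c_H\,\xi_j^{\frac32-H}(r_{+,N}-1)^{2m+1}\,t$. This is not uniform in $j$. Summing over $j$ produces a factor of order $\xi_n^{\frac32-H}$, which grows like $\exp\bigl(\frac{(3/2-H)\alpha}{HA_+}\sqrt N\bigr)$. Squaring and integrating in $t$ then gives $T^3$ rather than $T^{2H}/H$. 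So the steps you describe cannot produce the summand $n^2m^{2-2H}T^{2H}/H$ of $Q^{+}_{N,T}$.

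\textbf{The missing idea} is to trade all powers of $\xi_j$ against the exponential while keeping the $t$-dependence. Take $f(\gamma)=te^{-t\gamma}$, so $f^{(2m)}(\gamma)=t^{2m+1}e^{-t\gamma}$. The single-interval bound on $[u,v]=[\xi_j,\xi_{j+1}]$ then contains $t^{2m+1}e^{-tu}u^{2m+\frac32-H}(v/u-1)^{2m+1}$.

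Apply $e^{-x}\le(\eta/e)^{\eta}x^{-\eta}$ with $x=tu$ and the fractional exponent $\eta=2m+\frac32-H$. This turns the expression into $(\eta/e)^{\eta}\,t^{H-\frac12}(r_{+,N}-1)^{2m+1}$, uniformly in $j$. Only then does summing over the $n$ intervals give the factor $n$, hence $n^2$. The integral $\int_0^T t^{2H-1}\,\mathrm dt=T^{2H}/(2H)$ then gives $T^{2H}/H$.

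The power $m^{1-H}$ comes out of the same step. Combine Stirling's bound $(2m)!>\sqrt{4\pi m}\,(2m/e)^{2m}$ with $(1+\frac{3/2-H}{2m})^{2m}\le e^{\frac32-H}$ and $\eta^{\frac32-H}\le\frac72 m^{\frac32-H}$. Together these give $(\eta/e)^{\eta}/(2m)!\le\frac{7}{4\sqrt\pi}m^{1-H}$.

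\textbf{A smaller point: your tail step is unfinished.} Cauchy--Schwarz with weight $\gamma^{-1-2H}$ fails, because it leaves $\frac14\int_{\xi_n}^\infty\gamma^{-1}\,\mathrm d\gamma=\infty$. The weight $\gamma^{-1-H}$ works and gives exactly $\frac{1}{4H^2}\xi_n^{-2H}$. The paper instead uses Fubini and $\int_0^\infty t^2e^{-t(\gamma+\gamma')}\,\mathrm dt=2(\gamma+\gamma')^{-3}\le\frac14(\gamma\gamma')^{-3/2}$. This yields the same constant and hence the summand $\frac{3}{4H^2}$.
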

Since $\xi_n$ grows exponentially in $\sqrt{N}$, $W^+_N$ may grow rapidly. Therefore, the convergence of the second term in $D_N$ needs $\delta_1$ and $\delta_2$ to decrease properly with $N$, such that $D_N \to 0$.

The proof is given in Section \ref{app:gaussian_quadrature_smooth}. The first term in \eqref{eq:kernel_error_largeH_hybrid} is the Gaussian quadrature error, and the second is the additional finite-difference error. The long Peano-kernel argument is needed only for the explicit expression of $Q_{N,T}^{+}$; the finite-difference part is elementary. If $a,b$ are fixed and the unrounded choices of $m,n$ are used, then $r_{+,N}\to e^{\alpha\beta}$. Thus, conditions such as $\alpha\beta<\log 3$ give simple sufficient conditions for the displayed Peano-kernel upper bound to decay, although empirically calibrated parameters may still perform well outside this sufficient regime.

The Markovian approximation consists of coupled OU factors driven by common Brownian noise and therefore exhibits strong cross-factor dependence. It is an intrinsic consequence of the Volterra representation and plays a central role in both the augmented score-matching construction and the reverse-time sampling procedure developed later.

\section{Generative Modeling}
\label{sec:model}

We build a score-based generative model from the finite-dimensional Markovian approximations developed in Section~\ref{sec:volterra}. To avoid confusion with the quadrature nodes and weights used in Section~\ref{sec:volterra}, we use a separate notation for the final exponential kernel entering the generative model. After all quadrature and, when $H>\frac12$, finite-difference steps have been performed, write the approximation as
\begin{equation}
	\widehat{G}(t)
	=
	\sum_{i\in\mathcal I}\psi_i e^{-\kappa_i t},
	\qquad
	\kappa_i\ge0,\quad \psi_i\in\mathbb{R},
	\label{eq:final_exp_kernel_generative}
\end{equation}
where $\mathcal I$ is a finite index set. The notation $\widehat{G}(t)$ stands for $\widehat{G}_H(t)$ in \eqref{eq:Ghat_smallH} with $H \in (0, 1/2)$ and for $\widetilde{G}_H(t)$ in \eqref{eq:Gtilde_largeH_hybrid} with $H \in (1/2, 1)$. In the rough regime, the notation coincides with the quadrature output: $\kappa_i=\gamma_i$ and $\psi_i=\omega_i$. In the smooth regime, $(\kappa_i,\psi_i)_{i\in\mathcal I}$ denotes the relabelled exponential terms obtained from the hybrid finite-difference approximation \eqref{eq:Gtilde_largeH_hybrid}; these are not the original Gaussian quadrature nodes and weights. For example, the one-sided zero-node term contributes rates $0$ and $\delta_1$, while each positive quadrature node contributes the rates $\gamma_i-\delta_2$ and $\gamma_i+\delta_1$.

\subsection{Augmented Forward Processes}
\label{subsec:forward}

We specialize the Volterra forward process to the isotropic noising setting
\[
\mathbf{b}(t)=\mu(t)\mathbf{1}_D,
\qquad
\boldsymbol{\Sigma}(t)=g(t)\mathbf{I}_D,
\]
where $\mu$ and $g$ are deterministic scalar functions. Hence, all Markovian factors share the same $D$-dimensional Brownian motion.

In Section~\ref{sec:volterra}, the approximation is denoted by \(\widehat{\mathbf X}\). In the remainder of the paper, the generative model is defined using this Markovian approximation. To avoid excessive notation, we drop the hat and write \(\mathbf X\) for the approximating forward process:
\begin{equation}\label{eq:primary}
\mathbf{X}_t
=
\rho(t)\mathbf{X}_0
+
\sum_{i\in\mathcal I}\psi_i\mathbf{Y}_t^i,
\end{equation}
where $\mathbf{Y}_t^i$ satisfies
\begin{equation}
	\mathrm{d}\mathbf{Y}_t^i
	=
	\left[
	-\kappa_i\mathbf{Y}_t^i + \mu(t)\mathbf{1}_D
	\right]\mathrm{d}t
	+
	g(t)\,\mathrm{d}\mathbf{B}_t,
	\qquad
	\mathbf{Y}_0^i = \mathbf{0}_D.
	\label{eq:Y_centered_general}
\end{equation}
Recall that the attenuation schedule is specified as
\begin{equation*}
	\rho(t):=e^{-\kappa_{i_\star}t}.
\end{equation*}
The index $i_\star\in\mathcal I$ is chosen such that the corresponding $\psi_{i_\star}\ne 0$ and $\kappa_{i_\star}>0$. Hence, any rate-zero term in \eqref{eq:final_exp_kernel_generative}, when present, is retained in the auxiliary state. The main purpose of considering the index $i_\star$ is to exclude $\mathbf{Y}^{i_\star}_t$ and keep the primary state $\mathbf{X}_t$ random. Indeed, it is direct to show that the primary process \eqref{eq:primary} satisfies
\begin{equation}\label{eq:X_centered_general}
	\mathrm{d}\mathbf{X}_t
	=
	\left[
		-\kappa_{i_\star}\mathbf{X}_t
		-
		\sum_{j\in\mathcal J}
		(\kappa_j-\kappa_{i_\star})\psi_j\mathbf{Y}_t^j
		+
		\mu(t)\bar\psi\,\mathbf{1}_D
		\right]\mathrm{d}t
		+
		g(t)\bar\psi\,\mathrm{d}\mathbf{B}_t,
		\qquad
		\mathbf{X}_0\sim p_0,
\end{equation}
where 
\[
\mathcal J:=\mathcal I\setminus\{i_\star\},
\qquad
\bar\psi:=\sum_{i\in\mathcal I}\psi_i .
\]
In contrast, including all factors $\mathbf{Y}_t^i$ would make $\mathbf{X}_t$ deterministic if $\mathbf{X}_0$ and all factors are known. For later use, fix an ordering $\mathcal J=\{j_1,\ldots,j_M\}$, where $M=|\mathcal J|$, and define the stacked process
\[
\mathbf{Z}_t
:=
\left(
X_{t,1},Y_{t,1}^{j_1},\ldots,Y_{t,1}^{j_M},
\ldots,
X_{t,D},Y_{t,D}^{j_1},\ldots,Y_{t,D}^{j_M}
\right)
\in\mathbb{R}^{D(M+1)} .
\]
Then $\mathbf{Z}_t$ solves the linear SDE
\begin{equation}
	\mathrm{d}\mathbf{Z}_t
	=
	\mathbf{F}\mathbf{Z}_t\,\mathrm{d}t
	+
	\mathbf{b}_{\mathbf{Z}}(t)\,\mathrm{d}t
	+
	\mathbf{L}_{\mathbf{Z}}(t)\,\mathrm{d}\mathbf{B}_t.
	\label{eq:Z_linear_system}
\end{equation}
Here, the coefficients are
\[
\mathbf{F}
=
\mathbf{I}_D\otimes \mathbf{A},
\qquad
\mathbf{b}_{\mathbf{Z}}(t)
=
\mathbf{1}_D\otimes \bigl(\mu(t)\mathbf{q}\bigr),
\qquad
\mathbf{L}_{\mathbf{Z}}(t)
=
\mathbf{I}_D\otimes \bigl(g(t)\mathbf{q}\bigr),
\]
with 
\[
\mathbf{q}
:=
\left(
\bar\psi,1,\ldots,1
\right)^\top\in\mathbb{R}^{M+1}
\]
and
\[
\mathbf{A}
=
\begin{pmatrix}
	-\kappa_{i_\star}
	&
	-(\kappa_{j_1}-\kappa_{i_\star})\psi_{j_1}
	&
	\cdots
	&
	-(\kappa_{j_M}-\kappa_{i_\star})\psi_{j_M}
	\\
	0 & -\kappa_{j_1} &        & 0\\
	\vdots &        & \ddots & \vdots\\
	0 & 0      &        & -\kappa_{j_M}
\end{pmatrix}.
\]
The matrix $\mathbf{L}_{\mathbf{Z}}(t)$ has size $D(M+1)\times D$. Hence the augmented diffusion is rank-deficient whenever $M\ge1$, reflecting the fact that all factors for a given data coordinate are driven by the same Brownian component.

For later use, we present the distributions of the primal variable $\mathbf{X}_t$ and the augmented state $\mathbf{Z}_t$. For each $i\in\mathcal I$, define the scalar factor mean
\begin{equation}
	u_i(t)
	:=
	\int_0^t e^{-\kappa_i(t-s)}\mu(s)\,\mathrm{d}s .
	\label{eq:ui_mean}
\end{equation}
Then set
\begin{equation}
	\nu_x(t)
	:=
	\sum_{i\in\mathcal I}\psi_i u_i(t),
	\qquad
	\varphi(t,s)
	:=
	\sum_{i\in\mathcal I}
	\psi_i e^{-\kappa_i(t-s)}g(s).
	\label{eq:nu_phi_general}
\end{equation}
Proposition \ref{prop:primary_marginal} shows another representation of the primal state $\mathbf{X}_t$. The proof is straightforward and thus omitted. Indeed, \eqref{eq:X_marginal_repr} follows by substituting the integral forms of the factors and exchanging the finite sum with the integrals. The conditional Gaussian law \eqref{eq:X_marginal_law} then follows from It\^o's isometry.
\begin{proposition}\label{prop:primary_marginal}
	Under \eqref{eq:Y_centered_general}--\eqref{eq:X_centered_general}, the primary state $\mathbf{X}_t$ admits the representation
	\begin{equation}
		\mathbf{X}_t
		=
		\rho(t)\mathbf{X}_0
		+
		\nu_x(t)\mathbf{1}_D
		+
		\int_0^t \varphi(t,s)\,\mathrm{d}\mathbf{B}_s .
		\label{eq:X_marginal_repr}
	\end{equation}
	Consequently,
	\begin{equation}
		\mathbf{X}_t\mid\mathbf{X}_0
		\sim
		\mathcal{N}
		\left(
		\rho(t)\mathbf{X}_0+\nu_x(t)\mathbf{1}_D,
		\sigma_x^2(t)\mathbf{I}_D
		\right),
		\qquad
		\sigma_x^2(t)
		:=
		\int_0^t \varphi(t,s)^2\,\mathrm{d}s .
		\label{eq:X_marginal_law}
	\end{equation}
\end{proposition}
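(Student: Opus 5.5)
We will solve each factor SDE \eqref{eq:Y_centered_general} explicitly, assemble the primary state through the defining relation \eqref{eq:primary}, and then read off the conditional law. Because \eqref{eq:Y_centered_general} is linear with deterministic coefficients, variation of constants (Itô's product rule applied to $e^{\kappa_i t}\mathbf{Y}^i_t$) gives
\[
\mathbf{Y}^i_t=\int_0^t e^{-\kappa_i(t-s)}\mu(s)\mathbf{1}_D\,\mathrm{d}s+\int_0^t e^{-\kappa_i(t-s)}g(s)\,\mathrm{d}\mathbf{B}_s
= u_i(t)\mathbf{1}_D+\int_0^t e^{-\kappa_i(t-s)}g(s)\,\mathrm{d}\mathbf{B}_s,
\]
using the definition \eqref{eq:ui_mean}. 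This is exactly the integral form recorded after Proposition~\ref{prop:markovian_lift}, specialized to $\mathbf{b}=\mu\mathbf{1}_D$ and $\boldsymbol{\Sigma}=g\mathbf{I}_D$.

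Next we substitute this into \eqref{eq:primary}. The sum over $\mathcal I$ is finite, so it commutes with both the Lebesgue integral and the Itô integral. This yields the mean term $\sum_i\psi_i u_i(t)\mathbf{1}_D=\nu_x(t)\mathbf{1}_D$ and the stochastic term $\int_0^t\varphi(t,s)\,\mathrm{d}\mathbf{B}_s$, with $\nu_x$ and $\varphi$ as in \eqref{eq:nu_phi_general}; this is \eqref{eq:X_marginal_repr}. The statement lists \eqref{eq:X_centered_general} among its hypotheses, but we take \eqref{eq:primary} as the definition of $\mathbf{X}_t$. The paper notes that \eqref{eq:X_centered_general} is derived from \eqref{eq:primary}, so nothing more is needed here. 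If one instead started from the SDE \eqref{eq:X_centered_general}, uniqueness of solutions to the linear system \eqref{eq:Z_linear_system} would identify the two objects.

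For the conditional law, we condition on $\mathbf{X}_0$, which is independent of $\mathbf{B}$. Given $\mathbf{X}_0$, the term $\rho(t)\mathbf{X}_0+\nu_x(t)\mathbf{1}_D$ is deterministic. The remaining term is a Wiener integral of the deterministic, square-integrable integrand $\varphi(t,\cdot)$, since $g$ is bounded and the sum is finite. It is therefore centered Gaussian. Its coordinates $\int_0^t\varphi(t,s)\,\mathrm{d}B_{s,k}$ are driven by independent Brownian components, so they are independent. By Itô's isometry each coordinate has variance $\int_0^t\varphi(t,s)^2\,\mathrm{d}s=\sigma_x^2(t)$, and hence the covariance matrix is $\sigma_x^2(t)\mathbf{I}_D$. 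This gives \eqref{eq:X_marginal_law}.

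No step is a genuine obstacle. The only points needing care are the justification for interchanging the finite sum with the stochastic integral, which is immediate from linearity of the Itô integral, and the independence of $\mathbf{X}_0$ from $\mathbf{B}$, which is what allows conditioning on $\mathbf{X}_0$.
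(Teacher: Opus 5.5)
Your proposal is correct and follows the same route as the paper's (omitted) argument: solve each factor by variation of constants, substitute into \eqref{eq:primary} and interchange the finite sum with the integrals to obtain \eqref{eq:X_marginal_repr}, then use the independence of $\mathbf{X}_0$ from $\mathbf{B}$ and It\^o's isometry to get the conditional Gaussian law. Your added remarks on coordinatewise independence and on identifying the solution of \eqref{eq:X_centered_general} via uniqueness for the linear system fill in details the paper leaves implicit.
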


For the augmented state, we can solve the linear SDE \eqref{eq:Z_linear_system} as
	\[
	\mathbf{Z}_t
	=
	e^{\mathbf{F}t}\mathbf{Z}_0
	+
	\int_0^t e^{\mathbf{F}(t-s)}
	\mathbf{b}_{\mathbf{Z}}(s)\,\mathrm{d}s
	+
	\int_0^t e^{\mathbf{F}(t-s)}
	\mathbf{L}_{\mathbf{Z}}(s)\,\mathrm{d}\mathbf{B}_s .
	\]
Conditional on $\mathbf{X}_0$, the first two terms are deterministic and the last term is an It\^o integral with deterministic integrand. Hence, $\mathbf{Z}_t\mid\mathbf{X}_0$ is Gaussian. Its conditional mean and covariance are denoted by
\[
\mathbf{m}_{\mathbf{Z}}(t;\mathbf{X}_0)
:=
\mathbb{E}[\mathbf{Z}_t\mid\mathbf{X}_0],
\qquad
\mathbf{C}_{\mathbf{Z}}(t)
:=
\operatorname{Cov}(\mathbf{Z}_t\mid\mathbf{X}_0).
\]
They solve
\begin{align}
	\frac{\mathrm{d}}{\mathrm{d}t}\mathbf{m}_{\mathbf{Z}}(t;\mathbf{X}_0)
	&=
	\mathbf{F}\mathbf{m}_{\mathbf{Z}}(t;\mathbf{X}_0)
	+
	\mathbf{b}_{\mathbf{Z}}(t),
	\label{eq:mean_ode}\\
	\frac{\mathrm{d}}{\mathrm{d}t}\mathbf{C}_{\mathbf{Z}}(t)
	&=
	\mathbf{F}\mathbf{C}_{\mathbf{Z}}(t)
	+
	\mathbf{C}_{\mathbf{Z}}(t)\mathbf{F}^{\top}
	+
	\mathbf{L}_{\mathbf{Z}}(t)\mathbf{L}_{\mathbf{Z}}(t)^{\top},
	\label{eq:cov_ode}
\end{align}
with initial conditions $\mathbf{m}_{\mathbf{Z}}(0;\mathbf{X}_0)=(\mathbf{X}_0,\mathbf{0},\ldots,\mathbf{0})$ under the chosen stacking and $\mathbf{C}_{\mathbf{Z}}(0)=0$. These equations follow from taking conditional expectations and differentiating the covariance.

Next, we present the conditional law of the primary state $\mathbf{X}_t$ given $\bigl(\mathbf{X}_0,\mathbf{Y}_t^{\mathcal J}\bigr)$, where
\[
\mathbf{Y}_t^{\mathcal J}
:=
\left(
\mathbf{Y}_t^{j_1},\ldots,\mathbf{Y}_t^{j_M}
\right)
\in\mathbb{R}^{DM}.
\]
For a single coordinate, define the $M\times M$ covariance matrix and the cross-covariance vector
\begin{align}
	\mathbf{C}_{yy}(t)_{\ell k}
	&:=
	\int_0^t
	e^{-\kappa_{j_\ell}(t-s)}
	e^{-\kappa_{j_k}(t-s)}
	g(s)^2\,\mathrm{d}s,
	\label{eq:C_yy_explicit}\\
	\mathbf{c}_{yx}(t)_{\ell}
	&:=
	\int_0^t
	e^{-\kappa_{j_\ell}(t-s)}
	g(s)\varphi(t,s)\,\mathrm{d}s,
	\label{eq:c_yx_explicit}
\end{align}
for $\ell,k=1,\ldots,M$. The explicit covariance entries \eqref{eq:C_yy_explicit} and \eqref{eq:c_yx_explicit} follow from the integral representations of the factors and from \eqref{eq:X_marginal_repr}. 

Also, define
\[
c_{xx}(t):=\sigma_x^2(t),
\qquad
\mathbf{u}_{\mathcal J}(t):=(u_{j_1}(t),\ldots,u_{j_M}(t))^\top .
\]
When $\mathbf{C}_{yy}(t)$ is nonsingular, define
\[
\boldsymbol{\eta}(t)
:=
\mathbf{C}_{yy}(t)^{-1}\mathbf{c}_{yx}(t),
\qquad
c_{x\mid y}(t)
:=
c_{xx}(t)
-
\mathbf{c}_{yx}(t)^\top
\mathbf{C}_{yy}(t)^{-1}
\mathbf{c}_{yx}(t).
\]
If $\mathbf{C}_{yy}(t)$ is singular, the exact degenerate Gaussian conditioning formula can be written with the Moore--Penrose pseudoinverse $\mathbf{C}_{yy}(t)^\dagger$. In implementation, a truncated inverse may be used for numerical stability. See Section \ref{sec:cov_conditioning} for more details.

The well-known Gaussian conditioning formula gives
\begin{equation}
	\mathbf{X}_t
	\,\big|\,
	\bigl(\mathbf{X}_0,\mathbf{Y}_t^{\mathcal J}\bigr)
	\sim
	\mathcal{N}
	\left(
	\mathbf{m}_{x\mid y}(t),
	c_{x\mid y}(t)\mathbf{I}_D
	\right),
	\label{eq:X_given_Y_gaussian}
\end{equation}
where
\begin{equation}
	\mathbf{m}_{x\mid y}(t)
	=
	\rho(t)\mathbf{X}_0
	+
	\nu_x(t)\mathbf{1}_D
	+
	\sum_{\ell=1}^M
	\eta_\ell(t)
	\left(
	\mathbf{Y}_t^{j_\ell}
	-
	u_{j_\ell}(t)\mathbf{1}_D
	\right).
	\label{eq:X_given_Y_mean}
\end{equation}
We assume $c_{x\mid y}(t)>0$ on the training time interval. This excludes the degenerate case in which the omitted anchor factor is determined by the retained factors at time $t$.

\subsection{Augmented Score Matching}
\label{subsec:score_matching}

The augmented forward process $\mathbf{Z}_t$ is Markovian, but its state dimension $D(M+1)$ is much larger than the data dimension $D$. Training a full score network on the entire augmented state would therefore be unnecessarily expensive. The conditional Gaussian structure in Section~\ref{subsec:forward} allows us to reduce the learned part of the score to a data-dimensional network, while keeping the auxiliary contribution analytic.

Define the residual variable
\begin{equation}\label{eq:residualized_primary}
	\boldsymbol{\xi}_t
	:=
	\mathbf{X}_t
	-
	\sum_{\ell=1}^M
	\eta_\ell(t)
	\left(
	\mathbf{Y}_t^{j_\ell}
	-
	u_{j_\ell}(t)\mathbf{1}_D
	\right).
\end{equation}
By \eqref{eq:X_given_Y_gaussian}, conditional on $\mathbf{X}_0$ and $\mathbf{Y}_t^{\mathcal J}$, we have
\begin{equation}\label{eq:xi_dist}
\boldsymbol{\xi}_t  \sim \mathcal{N} \left( \rho(t)\mathbf{X}_0+\nu_x(t)\mathbf{1}_D, c_{x\mid y}(t)\mathbf{I}_D \right).
\end{equation}
Hence, the conditional score with respect to the primary coordinate is
\begin{equation}
	\nabla_{\mathbf{x}}
	\log p_{0t}
	\bigl(
	\mathbf{X}_t
	\,\big|\,
	\mathbf{Y}_t^{\mathcal J},\mathbf{X}_0
	\bigr)
	=
	-
	\frac{
		\boldsymbol{\xi}_t
		-
		\rho(t)\mathbf{X}_0
		-
		\nu_x(t)\mathbf{1}_D
	}{
		c_{x\mid y}(t)
	}.
	\label{eq:conditional_primary_score}
\end{equation}
Then we train a time-dependent data-dimensional score model
\[
\mathbf{s}_\theta:\mathbb{R}^D\times[0,T]\to\mathbb{R}^D
\]
on the residual variable $\boldsymbol{\xi}_t$. The augmented denoising score-matching objective is
\begin{align}
	\mathcal{L}_{\mathrm{aug}}(\theta)
	:=
	\mathbb{E}_{t}
	\mathbb{E}_{(\mathbf{X}_0,\mathbf{Y}_t^{\mathcal J})\sim p_0\otimes q_t}
	\mathbb{E}_{\mathbf{X}_t\mid \mathbf{X}_0,\mathbf{Y}_t^{\mathcal J}}
	\left[
	\lambda(t)
	\left\|
	\mathbf{s}_\theta(\boldsymbol{\xi}_t,t)
	+
	\frac{
		\boldsymbol{\xi}_t
		-
		\rho(t)\mathbf{X}_0
		-
		\nu_x(t)\mathbf{1}_D
	}{
		c_{x\mid y}(t)
	}
	\right\|_2^2
	\right].
	\label{eq:augmented_loss}
\end{align}
Here, $q_t$ denotes the forward-time marginal density of
$\mathbf{Y}_t^{\mathcal J}$. Time $t$ is sampled from the training time distribution and $\lambda(t)>0$ is the loss weight. For each fixed $t$, the $L^2$-optimal predictor is
\[
\mathbf{s}^*(\boldsymbol{\xi},t)
=
\mathbb{E}
\left[
-
\frac{
	\boldsymbol{\xi}_t
	-
	\rho(t)\mathbf{X}_0
	-
	\nu_x(t)\mathbf{1}_D
}{
	c_{x\mid y}(t)
}
\,\middle|\,
\boldsymbol{\xi}_t=\boldsymbol{\xi}
\right],
\]
which equals the marginal score $\nabla_{\boldsymbol{\xi}}\log p_t^\xi(\boldsymbol{\xi})$ of the residual variable $\boldsymbol{\xi}_t$.

Proposition \ref{prop:optimal_score} shows the advantage of considering score matching for $\boldsymbol{\xi}_t$. The decomposition \eqref{eq:augmented_score_estimator} is computationally important. The network $\mathbf{s}_\theta$ is only data-dimensional. The remaining auxiliary score is available analytically from the Gaussian covariance matrix. Indeed, with the stacking convention used in Section~\ref{subsec:forward},
\begin{equation}
	\nabla_{\mathbf{y}}\log q_t(\mathbf{Y}_t^{\mathcal J})
	=
	-
	\left(
	\mathbf{C}_{yy}(t)^{\dagger}\otimes\mathbf{I}_D
	\right)
	\left[
	\mathbf{Y}_t^{\mathcal J}
	-
	\mathbf{u}_{\mathcal J}(t)\otimes\mathbf{1}_D
	\right],
	\label{eq:auxiliary_score}
\end{equation}
where $\mathbf{C}_{yy}(t)^\dagger$ denotes the Moore--Penrose pseudoinverse to cover the singular case.

\begin{proposition}[Augmented score decomposition]
	\label{prop:optimal_score}
	Assume that $\mathbf{s}_\theta$ is optimal for \eqref{eq:augmented_loss}. Define the learned component of the augmented score by
	\begin{equation}
		\mathbf{S}_\theta(\mathbf{Z}_t,t)
		:=
		\left(
		\mathbf{s}_\theta(\boldsymbol{\xi}_t,t),
		-\eta_1(t)\mathbf{s}_\theta(\boldsymbol{\xi}_t,t),
		\ldots,
		-\eta_M(t)\mathbf{s}_\theta(\boldsymbol{\xi}_t,t)
		\right),
		\label{eq:learned_augmented_score}
	\end{equation}
	where the tuple is understood blockwise as the primary component followed by the $M$ auxiliary vector components. Then
	
	\begin{equation}
		\widehat{\nabla_{\mathbf z}\log p_t}(\mathbf Z_t)
		:=
		\mathbf S_\theta(\mathbf Z_t,t)
		+
		\left(
		\mathbf 0_D,
		\nabla_{\mathbf y}\log q_t(\mathbf Y_t^{\mathcal J})
		\right)
		\label{eq:augmented_score_estimator}
	\end{equation}
	is the corresponding augmented score estimator. If the exact inverse in the Gaussian conditioning formula is used and the score model is exact, then \eqref{eq:augmented_score_estimator} equals the score $\nabla_{\mathbf{z}}\log p_t(\mathbf{Z}_t)$.
\end{proposition}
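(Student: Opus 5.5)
Our plan is to prove the decomposition by a change of variables on the augmented state, with the residual map as the linear transformation. Per data coordinate, consider $(X,\mathbf{Y})\mapsto(\xi,\mathbf{Y})$ with
\[
\xi = X-\sum_{\ell}\eta_\ell(t)\bigl(Y^{j_\ell}-u_{j_\ell}(t)\bigr).
\]
This map is affine and unit lower-triangular, so its Jacobian determinant is one. Hence
\[
p_t(\mathbf{x},\mathbf{y}) = p^{\xi,Y}_t\bigl(\boldsymbol{\xi}(\mathbf{x},\mathbf{y}),\mathbf{y}\bigr).
\]
The central claim will be that $\boldsymbol{\xi}_t$ is independent of $\mathbf{Y}_t^{\mathcal J}$ under the forward law, which gives the factorization
\[
p_t(\mathbf{x},\mathbf{y}) = p_t^\xi(\boldsymbol{\xi})\,q_t(\mathbf{y}).
\]

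To prove independence, we will use that $\mathbf{X}_0$ is independent of the Brownian motion. Conditional on $\mathbf{X}_0$, the pair $(\mathbf{X}_t,\mathbf{Y}_t^{\mathcal J})$ is jointly Gaussian. By construction of $\boldsymbol{\eta}=\mathbf{C}_{yy}^{-1}\mathbf{c}_{yx}$, the centered residual has zero cross-covariance with $\mathbf{Y}_t^{\mathcal J}$, so given $\mathbf{X}_0$ the two are independent. By \eqref{eq:xi_dist}, the conditional law of $\boldsymbol{\xi}_t$ given $(\mathbf{X}_0,\mathbf{Y}_t^{\mathcal J})$ does not depend on $\mathbf{Y}_t^{\mathcal J}$. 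The law of $\mathbf{Y}_t^{\mathcal J}$ given $\mathbf{X}_0$ does not depend on $\mathbf{X}_0$ either, since the factors start at zero; this is exactly why the loss is written with $p_0\otimes q_t$. Integrating out $\mathbf{X}_0$ then yields
\[
p^{\xi,Y}_t(\boldsymbol{\xi},\mathbf{y}) = \Bigl(\int \mathcal N\bigl(\boldsymbol{\xi};\rho\mathbf{x}_0+\nu_x\mathbf 1,c_{x\mid y}\mathbf I\bigr)p_0(\mathrm d\mathbf x_0)\Bigr)\,q_t(\mathbf{y}),
\]
which is the desired product form.

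Next we differentiate with the chain rule. Since $\partial\boldsymbol{\xi}/\partial\mathbf{x}=\mathbf I_D$ and $\partial\boldsymbol{\xi}/\partial\mathbf{Y}^{j_\ell}=-\eta_\ell\mathbf I_D$, the $\mathbf{x}$-block of the score is $\nabla\log p_t^\xi(\boldsymbol{\xi})$. The $\mathbf{Y}^{j_\ell}$-block is
\[
-\eta_\ell\,\nabla\log p_t^\xi(\boldsymbol{\xi})+\nabla_{\mathbf{y}^{j_\ell}}\log q_t(\mathbf{y}).
\]
Two further facts then close the argument. First, the optimal network satisfies $\mathbf{s}_\theta=\mathbf{s}^*=\nabla\log p_t^\xi$; this follows from the $L^2$-projection argument already given after \eqref{eq:augmented_loss}, using the denoising identity \eqref{eq:denoising_identity} applied to the mixture representation above. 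Second, $q_t$ is the Gaussian $\mathcal N(\mathbf u_{\mathcal J}\otimes\mathbf 1_D,\mathbf C_{yy}\otimes\mathbf I_D)$, whose score is \eqref{eq:auxiliary_score}, with the true inverse when $\mathbf C_{yy}$ is nonsingular. Substituting both into the blockwise formula gives exactly \eqref{eq:augmented_score_estimator}.

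The main obstacle is the independence and factorization step, which needs some care. When $\mathbf C_{yy}$ is singular, $q_t$ has no Lebesgue density and the score in $\mathbf{y}$ is meaningful only on the support; the hypothesis ``exact inverse'' is what lets us avoid this case. We also need $c_{x\mid y}>0$ so that $p_t^\xi$ has a smooth density and differentiation under the integral is justified. We will state these as standing assumptions, and we will note the Kronecker stacking convention when reading off the blocks.
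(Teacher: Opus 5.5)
Your proposal is correct and follows essentially the same route as the paper's proof: the unit-Jacobian affine change of variables $(\mathbf x,\mathbf y)\mapsto(\boldsymbol\xi,\mathbf y)$, independence of $\boldsymbol\xi_t$ and $\mathbf Y_t^{\mathcal J}$ from the zero cross-covariance $\mathbf c_{yx}(t)-\mathbf C_{yy}(t)\boldsymbol\eta(t)=\mathbf 0$ together with the independence of $\mathbf X_0$ from the Brownian motion, the factorization $p_t=p_t^\xi\,q_t$, and the chain rule. Your explicit mixture formula for the joint density of $(\boldsymbol\xi_t,\mathbf Y_t^{\mathcal J})$ simply spells out the marginalization step that the paper states in one line.
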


\subsection{Reverse-Time Dynamics}\label{subsec:reverse}
Since the augmented process \(\mathbf{Z}=(\mathbf{Z}_t)_{t\in[0,T]}\) is Markovian, its reverse dynamics are obtained from the standard reverse-SDE formula. We use the reverse-time convention from Section~\ref{sec:background}. Then,
\[
\overline{\mathbf{Z}}_t:=\mathbf{Z}_{T-t},
\qquad
\overline{\mathbf{F}}:=-\mathbf{F},
\qquad
\overline{\mathbf{b}}_{\mathbf{Z}}(t):=-\mathbf{b}_{\mathbf{Z}}(T-t),
\qquad
\overline{\mathbf{L}}_{\mathbf{Z}}(t):=\mathbf{L}_{\mathbf{Z}}(T-t),
\]
where \(t\in[0,T]\) denotes the increasing reverse-time clock. The reverse SDE is
\begin{equation}
	\mathrm{d}\overline{\mathbf{Z}}_t
	=
	\left[
	\overline{\mathbf{F}} \, \overline{\mathbf{Z}}_t
	+
	\overline{\mathbf{b}}_{\mathbf{Z}}(t)
	+
	\overline{\mathbf{L}}_{\mathbf{Z}}(t)
	\overline{\mathbf{L}}_{\mathbf{Z}}(t)^\top
	\nabla_{\mathbf{z}}\log p_{T-t}(\overline{\mathbf{Z}}_t)
	\right]\mathrm{d}t
	+
	\overline{\mathbf{L}}_{\mathbf{Z}}(t)\,\mathrm{d}\overline{\mathbf{B}}_t.
	\label{eq:reverse_Z}
\end{equation}
The unknown score in \eqref{eq:reverse_Z} is replaced by the augmented score estimator in Proposition~\ref{prop:optimal_score}. This is the continuous-time target of the sampling procedure in both Hurst regimes.

The finite-dimensional lift has different numerical structures in the two regimes. In the rough regime, the final weights are nonnegative and the primary coordinate has a nonzero diffusion loading. In the smooth regime, the hybrid finite-difference approximation gives a signed exponential sum satisfying \(\bar\psi=\sum_{i\in\mathcal I}\psi_i=0\). The primary coordinate then has no direct Brownian forcing and no direct score correction in \eqref{eq:reverse_Z}. Therefore, we use different discretizations in the two cases.

\subsubsection{Reverse-SDE discretization for \texorpdfstring{$H<\frac12$}{H < 1/2}}
\label{subsubsec:reverse_rough}

For \(H<\frac12\), the Laplace representation produces a Markovian approximation with $\bar\psi > 0$. Hence, both the primary diffusion loading \(g(t)\bar\psi\,\mathrm{d}\mathbf{B}_t\) in \eqref{eq:X_centered_general} and the primary block of \(\mathbf{L}_{\mathbf{Z}}\mathbf{L}_{\mathbf{Z}}^\top\) are nonzero. The reverse SDE supplies both stochastic forcing and score correction to the primary state. In this regime, we discretize \eqref{eq:reverse_Z} directly, for example by Euler--Maruyama or another standard SDE solver, using the score estimator from Proposition~\ref{prop:optimal_score}. This method is analogous to the standard score-based SDE samplers of \citet{song2020score} and to the augmented reverse-SDE sampler used in GFDM~\citep{nobis2024generative}. For stability, it is convenient to rescale the primary coordinate by \(\bar\psi\), so that its diffusion loading is comparable to that of the auxiliary factors. 

\subsubsection{Sampling in the smooth regime}
\label{subsec:reverse_H_large}

For \(H>\frac12\), the signed weights cancel at the origin: $\bar\psi=\sum_{i\in\mathcal I}\psi_i=\widehat G(0)=0$. Thus the primary coordinate has no direct Brownian forcing and no direct score correction in the augmented reverse SDE. This degeneracy is structural. Its numerical effect, however, depends on the lift size. For small lifts, such as \(N=2\) or \(N=3\) in our experiments, the auxiliary rates remain comparable and ordinary explicit reverse discretization is stable. For larger lifts, the finite-difference construction introduces rates spanning several orders of magnitude, and the deterministic coupling from the auxiliary variables to the primary state becomes stiff. In that stiff regime, we use the Gaussian-bridge reconstruction sampler described in the Appendix \ref{app:bridge}.

\section{Numerical Study} \label{sec:numerical}

In this section, we detail the numerical implementation of the proposed model. We first outline the core computational components, including the configuration of the quadrature rates and weights, the truncated pseudoinverse stabilization for the covariance matrix, the adaptive noise schedule, and the discrete-time sampling scheme. We subsequently validate the method on MNIST, where SVDM outperforms the evaluated baselines, and on CIFAR-10, where a preliminary small-lift experiment tests whether the same persistent Volterra perturbation can extend to natural images.

Small persistent Volterra lifts can improve image generation quality, with the best MNIST results obtained at \(H=0.9\) and \(N=2\). Moreover, increasing the lift size is not automatically beneficial. Larger lifts can improve the kernel approximation but also introduce stiff rates, ill-conditioned auxiliary covariance matrices, and unstable reverse-time discretizations. The Gaussian-bridge sampler is introduced to stabilize these larger stiff lifts, while the best empirical performance is achieved in the non-stiff small-lift regime.

\subsection{Factor Rates, Weights, and Stiffness}
\label{subsec:quad_structure}

To understand the behavior of the factors $\mathbf{Y}^i$, this section presents the statistics of factor rates $\kappa_i$ and weights $\psi_i$ used in the augmented generative model. These quantities are not learned parameters. They are obtained from the quadrature construction in Section~\ref{sec:volterra} and, in the smooth regime, together with the finite-difference steps. 

Table~\ref{tab:factor_weight_sum} reports the weight sum \(\bar\psi\) under a canonical diagnostic choice of quadrature parameters \(a=1\), \(b=1\), and \(\alpha=1.065\). In this subsection, the sub-interval quadrature level is fixed to \(m=1\), which implicitly determines \(\beta\) as a function of \(H\) and \(N\). In the rough regime \(H=0.3\), the quadrature weights are nonnegative and \(\bar\psi>0\). In the smooth regimes \(H=0.7\) and \(H=0.9\), the finite-difference construction produces signed exponential pairs that cancel at the origin, so \(\bar\psi=0\). Table~\ref{tab:factor_rate_max} reports the largest rate $\kappa_{\max} := \max_{i\in\mathcal I_N}\kappa_i$. Large values of \(\kappa_{\max}\) correspond to fast mean-reverting speed. The quadrature interval endpoint \(\xi_n\) grows exponentially in \(\sqrt N\), and the resulting quadrature nodes can span several orders of magnitude. The effect is especially pronounced in the rough regime, where the kernel is singular near the origin.

\begin{table}[t]
	\centering
	\captionsetup{justification=justified, singlelinecheck=false}
	\caption{Weight sum \(\bar\psi\) under different quadrature budgets \(N\).}
	\label{tab:factor_weight_sum}
	\begin{tabular}{c|ccccc}
		\toprule
		& \multicolumn{5}{c}{\(N\)} \\
		\cmidrule(lr){2-6}
		\(H\) & \(2\) & \(4\) & \(8\) & \(16\) & \(32\) \\
		\midrule
		\(0.3\) & \(0.70\) & \(1.09\) & \(1.77\) & \(3.09\) & \(6.11\) \\
		\(0.7\) & \(0.00\) & \(0.00\) & \(0.00\) & \(0.00\) & \(0.00\) \\
		\(0.9\) & \(0.00\) & \(0.00\) & \(0.00\) & \(0.00\) & \(0.00\) \\
		\bottomrule
	\end{tabular}
\end{table}

\begin{table}[t]
	\centering
	\captionsetup{justification=justified, singlelinecheck=false}
	\caption{Largest final rate \(\kappa_{\max}=\max_{i\in\mathcal I_N}\kappa_i\) under different quadrature budgets \(N\).}
	\label{tab:factor_rate_max}
	\begin{tabular}{c|ccccc}
		\toprule
		& \multicolumn{5}{c}{\(N\)} \\
		\cmidrule(lr){2-6}
		\(H\) & \(2\) & \(4\) & \(8\) & \(16\) & \(32\) \\
		\midrule
		\(0.3\) & \(5.42\) & \(18.78\) & \(92.87\) & \(796.05\) & \(15343.90\) \\
		\(0.7\) & \(1.49\) & \(2.81\) & \(6.05\) & \(16.34\) & \(62.26\) \\
		\(0.9\) & \(1.32\) & \(2.27\) & \(4.34\) & \(9.98\) & \(30.50\) \\
		\bottomrule
	\end{tabular}
\end{table}

The large rates in Table~\ref{tab:factor_rate_max} create stiffness in the lifted system. A factor with rate \(\kappa_i\) evolves on the time scale \(\kappa_i^{-1}\). If a numerical step size is much larger than this time scale, an explicit integrator cannot resolve the factor accurately. This issue is common in Markovian lifts of fractional kernels because the approximation must represent behavior across a wide range of exponential decay scales. 

Hence, we should choose suitable sampling methods in the forward and reverse-time processes to tackle the instability caused by large mean-reverting rates. We present the implementation for simulating the augmented forward and reverse processes in Section \ref{sec:numerical_discretization} later. Before that, we address another numerical instability issue caused by possible singularity in factor covariance matrices.

\subsection{Singularity in Factor Covariance}
\label{sec:cov_conditioning}

The factor covariance matrix \(\mathbf C_{yy}(t)\) defined in Section~\ref{subsec:forward} enters both the Gaussian conditioning coefficient \(\boldsymbol{\eta}(t)\) and the auxiliary score in \eqref{eq:auxiliary_score}. Its non-singularity is therefore important for both training and reverse-time sampling. Since all auxiliary factors are driven by the same Brownian motion, \(\mathbf C_{yy}(t)\) can be ill-conditioned in both regimes. Moreover, in the smooth regime, the finite-difference construction creates nearly paired exponential rates, which leads to small eigenvalues.

We first give a simple calculation illustrating the source of this degeneracy. Consider one coordinate of two auxiliary factors with positive rates \(\kappa_i\) and \(\kappa_k\). Suppressing the coordinate index, their covariance is
\[
\operatorname{Cov}(Y_t^i,Y_t^k)
=
\int_0^t
e^{-(\kappa_i+\kappa_k)(t-s)}g(s)^2\,\mathrm{d}s ,
\]
which follows from the fact that the two factors are driven by the same Brownian component. If \(g(s)\equiv g\) is constant, then in the long run, we have
\[
\operatorname{Cov}(Y_\infty^i,Y_\infty^k)
=
\frac{g^2}{\kappa_i+\kappa_k},
\qquad
\operatorname{Var}(Y_\infty^i)
=
\frac{g^2}{2\kappa_i}.
\]
Hence, large rates produce small marginal variances. In addition, close rates lead to strong correlations. Indeed, the correlation is
\[
\frac{
	\operatorname{Cov}(Y_\infty^i,Y_\infty^k)
}{
	\sqrt{\operatorname{Var}(Y_\infty^i)\operatorname{Var}(Y_\infty^k)}
}
=
\frac{2\sqrt{\kappa_i\kappa_k}}{\kappa_i+\kappa_k}.
\]
This is the geometric-to-arithmetic mean ratio of the two rates. It is close to one when \(\kappa_i\) and \(\kappa_k\) are close. Therefore, the conditioning of \(\mathbf C_{yy}(t)\) is affected by both the scale of the rates and their spacing. Large rates generate small-variance factors, while nearby rates generate almost collinear factors.

The \(H>\frac12\) regime introduces a further source of ill-conditioning. Before relabelling the final exponential approximation, let $(\gamma_k, \omega_k)$ denote a positive Gaussian quadrature node and weight appearing in the intermediate representation. The finite-difference construction replaces this term by exponentials with the rate-weight pairs given by
\[
\left(\kappa_{k,-},\psi_{k,-}\right)
:=
\left(
\gamma_k - \delta_2, \frac{\omega_k}{\delta_1 + \delta_2}
\right),
\qquad
\left(\kappa_{k,+},\psi_{k,+}\right)
:=
\left(
\gamma_k+\delta_1,-\frac{\omega_k}{\delta_1+\delta_2}
\right).
\]
Since \(\kappa_{k,-}\) and \(\kappa_{k,+}\) are close, the corresponding factors are nearly collinear. This creates small eigenvalues in \(\mathbf C_{yy}(t)\). In representative smooth-regime experiments, for example \(H=0.7\), the smallest eigenvalues can be several orders of magnitude smaller than the largest ones, which makes raw inversion numerically unstable.

To improve numerical stability, we use a truncated pseudoinverse for \(\mathbf C_{yy}(t)\). Let
\[
\mathbf C_{yy}(t)
=
\mathbf V(t)\boldsymbol{\Lambda}(t)\mathbf V(t)^\top
\]
be the spectral decomposition, with eigenvalues \(\lambda_1(t),\ldots,\lambda_M(t)\). Denote
\[
\lambda_{\max}(t):=\max_i\lambda_i(t).
\]
For a relative threshold \(\tau_{\mathrm{rel}}>0\), define $\tau(t):=\tau_{\mathrm{rel}}\lambda_{\max}(t)$.
The truncated pseudoinverse is
\begin{equation}
	\mathbf C_{yy}^{\dagger,\tau}(t)
	=
	\mathbf V(t)
	\operatorname{diag}\!\left(
	\lambda_1^{\dagger,\tau}(t),\ldots,\lambda_M^{\dagger,\tau}(t)
	\right)
	\mathbf V(t)^\top,
	\qquad
	\lambda_i^{\dagger,\tau}(t)
	=
	\begin{cases}
		\lambda_i(t)^{-1}, & \lambda_i(t)>\tau(t),\\
		0, & \lambda_i(t)\le \tau(t).
	\end{cases}
	\label{eq:appendix_Cyy_pinv}
\end{equation}
In the experiments, we use \(\tau_{\mathrm{rel}}=10^{-3}\). Thus, directions whose eigenvalue is below the numerical threshold are projected out rather than inverted. This is a rank truncation, instead of a ridge regularization.

In the exact Gaussian conditioning formula, \(\mathbf C_{yy}(t)^{-1}\) is used when the matrix is nonsingular, and the Moore--Penrose pseudoinverse can be used for a degenerate Gaussian law. In the numerical implementation, these inverses are replaced by \(\mathbf C_{yy}^{\dagger,\tau}(t)\). Accordingly, the implemented regression coefficient is
\[
\boldsymbol{\eta}^{\tau}(t)
:=
\mathbf C_{yy}^{\dagger,\tau}(t)\mathbf c_{yx}(t),
\]
and the auxiliary Gaussian score is evaluated in the stabilized form
\[
\nabla_{\mathbf y}\log q_t(\mathbf Y_t^{\mathcal J})
\approx
-
\left(\mathbf C_{yy}^{\dagger,\tau}(t)\otimes\mathbf I_D\right)
\left[
\mathbf Y_t^{\mathcal J}
-
\mathbf u_{\mathcal J}(t)\otimes\mathbf 1_D
\right].
\]
The truncation removes directions whose eigenvalue is below numerical resolution. This avoids amplifying tiny eigenvalues  in the regression coefficients and in the auxiliary Gaussian score, at the cost of introducing a small stabilization bias in the discarded directions.

\subsection{Noise Schedule}\label{subsec:noise_schedule}

This section describes the choice of the deterministic schedules \(\mu\) and \(g\) used in the augmented forward process. 

\paragraph{Design criteria.} The schedules \(\mu\) and \(g\) affect both the amount of injected noise and the deterministic drift of the augmented system. We choose them according to the following criteria. First, the schedule should be positive and simple to evaluate, since it is used repeatedly in the Gaussian covariance formulas. Second, the terminal primary innovation variance should be normalized across Hurst regimes and lift sizes. Otherwise, differences in sample quality or numerical stability may simply reflect different amounts of injected noise rather than properties of the Markovian lift. Third, the normalization should be based on the actual innovation variance $$v_x(T):=\operatorname{Var}\!\left(X_T-\rho(T)X_0\mid X_0\right), $$
rather than on the direct diffusion loading \(g(t)\bar\psi\). This is essential in the smooth regime, where \(\bar\psi=0\). Fourth, the same construction should apply to both rough and smooth regimes without using a separate scale parameter for each \(H\) and \(N\).

Appendix \ref{app:noise} shows the design details and gives $g(t)$ and $\mu(t)$ in \eqref{eq:variance_normalized_h}.

\paragraph{Numerical validation.}
For \(D=1\), the primary innovation variance at time \(t\) is
\begin{equation}
	v_x(t)
	=
	\int_0^t
	\Big[
	\sum_{i\in\mathcal I_N}
	\psi_i e^{-\kappa_i(t-s)}g(s)
	\Big]^2
	\mathrm d s .
	\label{eq:numerical_primary_variance}
\end{equation}
For general \(D\), the corresponding innovation covariance is \(v_x(t)\mathbf I_D\).

\begin{figure}[t]
	\centering
	\begin{subfigure}[t]{0.48\textwidth}
		\centering
		\includegraphics[width=\textwidth]{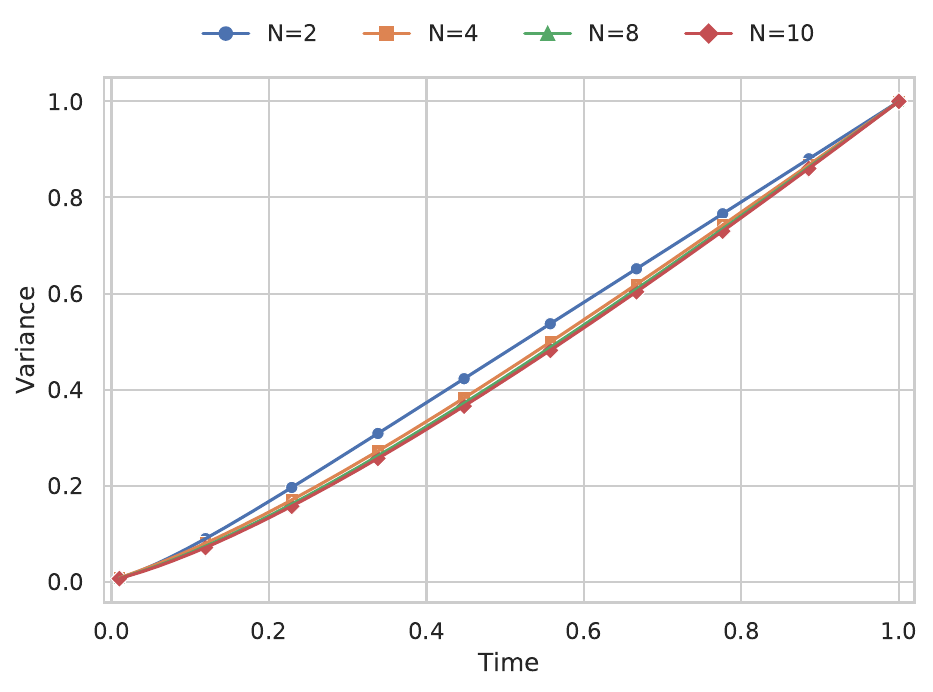}
		\caption{\(H=0.3\) (rough regime).}
		\label{fig:var_H03}
	\end{subfigure}
	\hfill
	\begin{subfigure}[t]{0.48\textwidth}
		\centering
		\includegraphics[width=\textwidth]{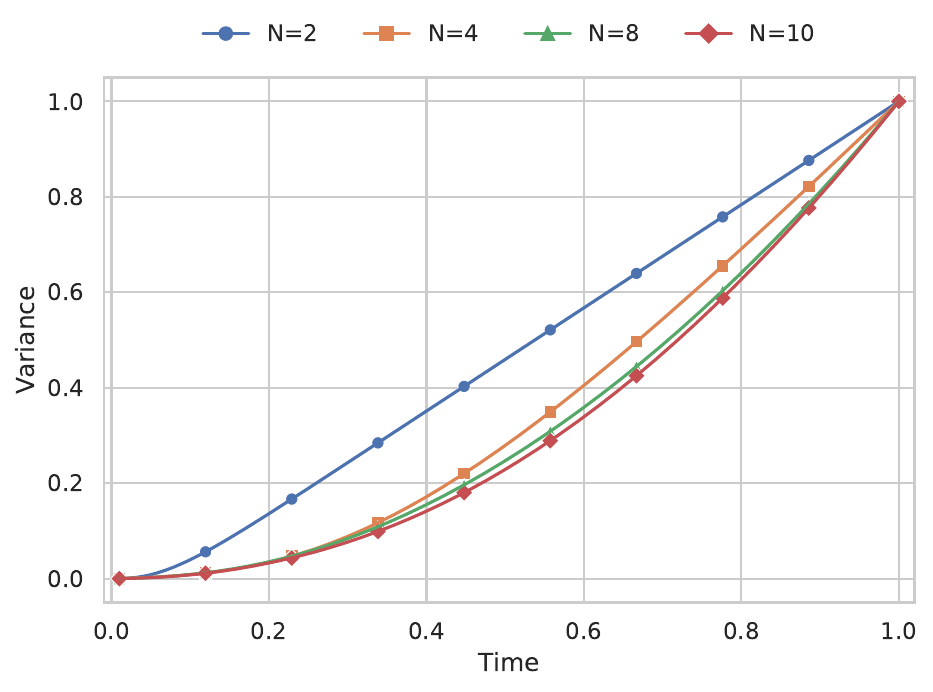}
		\caption{\(H=0.7\) (smooth regime).}
		\label{fig:var_H07}
	\end{subfigure}
	\captionsetup{justification=justified, singlelinecheck=false}
	\caption{Primary innovation variance $v_x(t)$ under the noise schedule \eqref{eq:variance_normalized_h}. The Hurst exponents are \(H\in\{0.3,0.7\}\), and the quadrature budgets are \(N\in\{2,4,8,10\}\).}
	\label{fig:var_terminal}
\end{figure}

Figure~\ref{fig:var_terminal} reports \(v_x(t)\) for \(H\in\{0.3,0.7\}\) and \(N\in\{2,4,8,10\}\). As predicted by \eqref{eq:variance_normalized_h}, all curves satisfy $v_x(T)=1$. This confirms that the schedule removes the scale distortion introduced by different Markovian approximations. Once the terminal variance is fixed, the remaining differences in covariance conditioning and reverse-time stability can be attributed to the structure of the lifted system rather than to a trivial change in the amount of injected noise.

\subsection{Sampling Schemes}
\label{sec:numerical_discretization}

The forward system \((\mathbf{X}_t,\mathbf Y_t^{\mathcal J})\) has explicit Gaussian marginal and conditional laws. Therefore, for each sampled time \(t\), forward sampling is implemented directly by
\[
\mathbf X_0\sim p_0,
\qquad
\mathbf Y_t^{\mathcal J}\sim q_t,
\qquad
\mathbf X_t\mid(\mathbf X_0,\mathbf Y_t^{\mathcal J})
\sim
\mathcal N\!\left(
\mathbf m_{x\mid y}(t),
c_{x\mid y}(t)\mathbf I_D
\right).
\]
This direct Gaussian sampling avoids the stability restrictions that would arise from explicitly integrating high-rate auxiliary factors with an Euler scheme.

Reverse-time generation follows the dynamics in Section~\ref{subsec:reverse}:
\begin{enumerate}
	\item When \(H<\frac12\), the final exponential weights are nonnegative and the primary diffusion loading \(g(t)\bar\psi\) is nonzero. In this case, we use a standard Euler--Maruyama discretization of the augmented reverse SDE, together with the augmented score estimator from Proposition~\ref{prop:optimal_score}. For numerical stability, the primary coordinate \(\mathbf X_t\) can be rescaled by \(\bar\psi\), so that the primary and auxiliary diffusion loadings have comparable magnitude.
	
	\item When \(H>\frac12\), the hybrid finite-difference construction gives \(\bar\psi=0\). The primary coordinate then has neither direct stochastic forcing nor direct score correction in a naive reverse Euler step. In our experiments, the numerical treatment depends on the lift size. For \(N=2\) and \(N=3\), the auxiliary rates are comparable and the explicit reverse discretization remains stable; these are also the configurations that give the best FID scores. For larger lifts, such as \(N\ge4\), the finite-difference rates become stiff, and we use the Gaussian-bridge reconstruction sampler in Appendix \ref{app:bridge}. The bridge step should therefore be understood as a stabilizer for stiff smooth-regime lifts rather than as the default sampler for all \(H>\frac12\) configurations.
\end{enumerate}
These numerical procedures are designed to stabilize sampling without changing the finite kernel approximation. Hence, the approximation error bounds in Section~\ref{sec:volterra} remain applicable to the underlying continuous-time Markovian approximation.

\paragraph{Numerical validation.}
We now compare the naive Euler reverse step with the bridge-based auxiliary update used in the sampler of Section~\ref{subsec:reverse_H_large}. The test configuration is \(H=0.7\) and \(N=8\).

The implemented sampler uses the full joint Gaussian bridge for \(\mathbf Y_t^{\mathcal J}\). To visualize the stability mechanism, however, it is useful to look at the scalar marginal bridge coefficient of a single factor. For a single coordinate of a factor with rate \(\kappa_i\), define
\[
v_i(t)
:=
\int_0^t e^{-2\kappa_i(t-u)}g(u)^2\,\mathrm{d}u .
\]
For a reverse step from forward time \(t\) to \(s<t\), the scalar bridge coefficient is
\[
K_i(s,t)
=
e^{-\kappa_i(t-s)}
\frac{v_i(s)}{v_i(t)} .
\]
This coefficient is the Gaussian regression coefficient in the conditional law of \(Y_s^i\) given \(Y_t^i\), where only a single coordinate is considered for simplicity. It determines how strongly the later-time factor value influences the earlier-time factor value in the bridge. Unlike the explicit reverse Euler amplification, it remains bounded for stiff factors. Figure~\ref{fig:bridge_stability} compares the Euler amplification factor with the bridge coefficient under the rates $\kappa_i\in\{42, 284\}$. The difference is substantial. The explicit Euler amplification grows rapidly and crosses the float32 overflow threshold for the largest rates after a small number of reverse steps. By contrast, the bridge coefficients remain bounded throughout the simulation. This illustrates why the bridge sampler is needed for larger smooth-regime lifts. Combined with the DDIM-type reconstruction of the primary state, this yields a stable reverse sampler for stiff larger smooth-regime lifts.

\begin{figure}[t]
	\centering
	\begin{subfigure}[t]{0.48\textwidth}
		\centering
		\includegraphics[width=\textwidth]{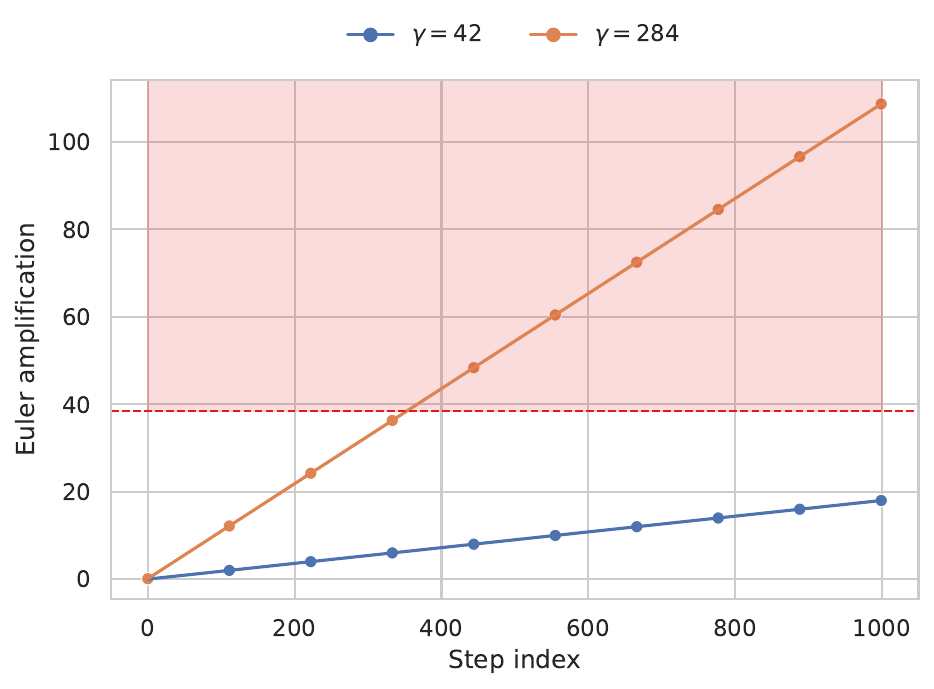}
		\caption{Explicit Euler amplification.}
		\label{fig:em_amplification}
	\end{subfigure}
	\hfill
	\begin{subfigure}[t]{0.48\textwidth}
		\centering
		\includegraphics[width=\textwidth]{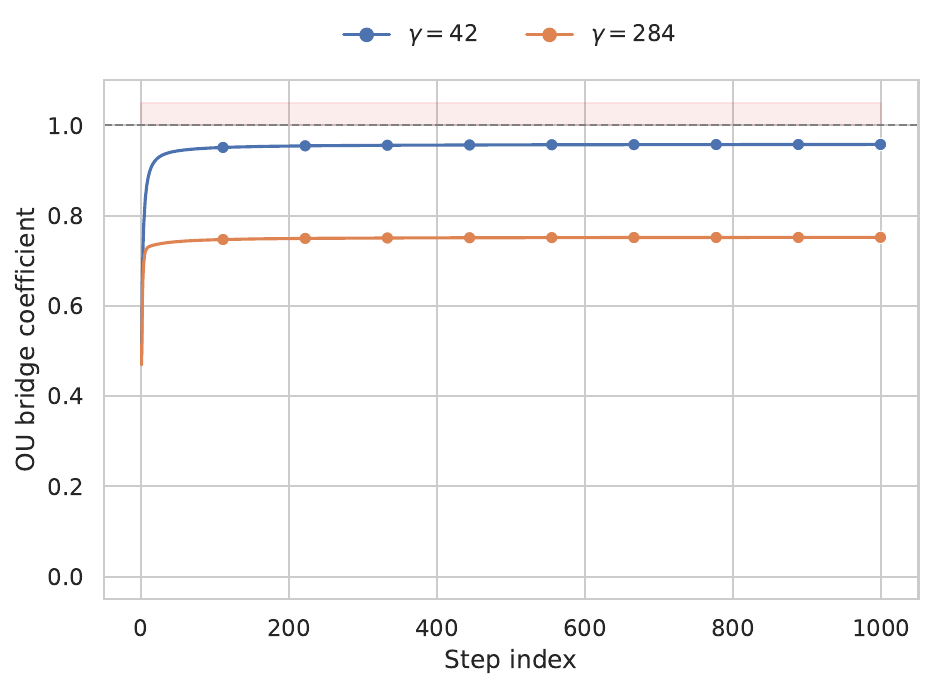}
		\caption{Gaussian bridge coefficient.}
		\label{fig:ou_bridge_stability}
	\end{subfigure}
	\captionsetup{justification=justified, singlelinecheck=false}
	\caption{Reverse-step stability with \(H=0.7\) and \(N=8\). The left panel shows $\log_{10}((1+\kappa_i\Delta t)^{\mathrm{step}})$ for explicit Euler amplification; the shaded region marks the float32 overflow range. The right panel shows the Gaussian bridge coefficient \(K_i(s,t)\).
	}
%	\caption{Reverse-step stability with \(H=0.7\) and \(N=8\). The left panel shows the explicit Euler amplification factor \((1+\kappa_i\Delta t)^{\mathrm{step}}\); the shaded region marks the float32 overflow range. The right panel shows the scalar OU-bridge coefficient \(K_i(s,t)\).}
	\label{fig:bridge_stability}
\end{figure}

\subsection{Image Generation on MNIST}
\label{sec:experiments}

We evaluate the proposed stochastic Volterra diffusion model (SVDM) on MNIST. This experiment has two purposes. First, it compares the Volterra noising mechanism with Brownian and fractional-diffusion baselines. Second, it examines how the Hurst parameter \(H\) and the Markovian lift size \(N\) affect generation quality.

We use a conditional U-Net architecture following \citet{ronneberger2015u}, with attention resolutions \([4,2]\), three residual blocks, and channel multipliers \([1,2,2,2,2]\). The learning rate of the Adam optimizer is controlled by the OneCycle schedule of \citet{smith2019super}. All models are trained for \(50{,}000\) iterations with batch size \(1024\) and maximum learning rate \(10^{-4}\). We do not use exponential moving average.

To isolate the effect of the forward noising mechanism, the Brownian SDE baseline and SVDM use the same network architecture and training protocol. We consider $H\in\{0.3,0.7,0.9\}$ and $N\in\{2,3,4,5,6\}$. The hyperparameters in Gaussian quadrature and finite difference are set as $a = 4.108$, $b = 10.96$, $\alpha = 1.801$, $\beta = 1.318$, and $\delta_1 = \delta_2 = 0.1$.

We also compare with GFDM~\citep{nobis2024generative} under matched \((H,N)\) configurations. Unless otherwise stated, SVDM uses the noise schedule in Section~\ref{subsec:noise_schedule}. For \(H>\frac12\), the sampler is chosen according to the lift size: we use ordinary explicit reverse discretization for the non-stiff small-lift configurations \(N=2,3\), and the Gaussian-bridge reconstruction sampler for larger stiff lifts. The best-performing configurations in Table~\ref{tab:fid_mnist} are small-lift runs and therefore use the explicit reverse discretization. It is important to note that the best configurations, \(N=2\) and \(N=3\), do not rely on the Gaussian-bridge sampler. In these small-lift regimes the auxiliary rates are sufficiently moderate that explicit reverse discretization is stable. The strong performance therefore comes from the persistent Volterra noising mechanism itself, rather than from the bridge stabilization. The bridge sampler becomes relevant for larger \(N\), where the Markovian lift is closer to the fractional kernel but the reverse dynamics become stiff.

In Table~\ref{tab:fid_mnist}, the best result is obtained at \(H=0.9\), \(N=2\), with $\mathrm{FID}=0.52\pm0.0024$. This is the lowest FID among all models and configurations reported in Table~\ref{tab:fid_mnist}. It improves substantially over the Brownian SDE baseline, which has FID \(7.49\), and over the best matched GFDM configuration, whose FID is \(7.34\). The improvement remains strong at \(N=3\), where SVDM obtains FID values \(0.68\) for \(H=0.7\) and \(0.67\) for \(H=0.9\).

The dependence on \(N\) is non-monotone. In the persistent regime, increasing \(N\) beyond \(2\) or \(3\) does not improve sample quality and eventually degrades performance. This agrees with the diagnostic results: a larger lift can improve the nominal kernel approximation, but it also worsens covariance conditioning and reverse-time stiffness. In the rough regime \(H=0.3\), SVDM is better than GFDM at \(N=2\), but both methods perform worse than in the persistent regime. This is consistent with the large-rate behavior of the rough-regime Markovian lift.

\begin{table}[t]
	\centering
	\captionsetup{justification=justified, singlelinecheck=false}
	\caption{MNIST generation quality measured by FID \((\downarrow)\). The Brownian SDE row is independent of \(H\) and \(N\). Within each method--\(H\) row, the best value over \(N\) is shown in bold. The overall best value is achieved by SVDM with \(H=0.9\) and \(N=2\).}
	\label{tab:fid_mnist}
	\resizebox{\textwidth}{!}{%
		\begin{tabular}{llccccc}
			\toprule
			\textbf{Method} & \textbf{\(H\)} & \(\mathbf{N=2}\) & \(\mathbf{N=3}\) & \(\mathbf{N=4}\) & \(\mathbf{N=5}\) & \(\mathbf{N=6}\) \\
			\midrule
			SDE & -- & \multicolumn{5}{c}{\(7.49\pm0.0142\)} \\
			\midrule
			\multirow{3}{*}{GFDM}
			& 0.3 & \(\mathbf{11.82\pm0.0412}\) & \(14.95\pm0.0391\) & \(16.68\pm0.0387\) & \(17.00\pm0.0363\) & \(20.72\pm0.0445\) \\
			& 0.7 & \(8.43\pm0.0409\) & \(8.23\pm0.0389\) & \(7.72\pm0.0374\) & \(7.36\pm0.0252\) & \(\mathbf{7.34\pm0.0241}\) \\
			& 0.9 & \(8.44\pm0.0383\) & \(\mathbf{7.74\pm0.0412}\) & \(19.83\pm0.0175\) & \(18.11\pm0.0642\) & \(27.53\pm0.0571\) \\
			\midrule
			\multirow{3}{*}{SVDM}
			& 0.3 & \(\mathbf{9.08\pm0.0243}\) & \(23.22\pm0.0378\) & \(26.67\pm0.0421\) & \(26.59\pm0.0394\) & \(28.77\pm0.0436\) \\
			& 0.7 & \(\mathbf{0.54\pm0.0050}\) & \(0.68\pm0.0038\) & \(7.49\pm0.0387\) & \(8.72\pm0.0352\) & \(7.08\pm0.0158\) \\
			& 0.9 & \(\mathbf{0.52\pm0.0024}\) & \(0.67\pm0.0061\) & \(7.56\pm0.0416\) & \(9.44\pm0.0480\) & \(24.94\pm0.0641\) \\
			\bottomrule
		\end{tabular}%
	}
\end{table}

Moreover, Table~\ref{tab:fid_mnist_context} compares the best SVDM result with representative published MNIST FID values. These external numbers are not protocol-matched, because FID implementations, sample counts, architectures, and conditioning mechanisms differ across papers. Nevertheless, they indicate that the FID \(0.52\) obtained by SVDM is highly competitive among reported results.

\begin{table}[t]
	\centering
	\captionsetup{justification=justified, singlelinecheck=false}
	\caption{Contextual comparison with representative published MNIST FID reports.}
	\label{tab:fid_mnist_context}
	\begin{tabular}{lcc}
		\toprule
		\textbf{Model} & \textbf{Reported MNIST FID} & \textbf{Source} \\
		\midrule
		SVDM, \(H=0.9,N=2\) & \(\mathbf{0.52\pm0.0024}\) & This work \\
		cU-Net with DDIM & \(2.55\) & \citet{calvo2024missing} \\
		VPGA & \(4.97\pm0.07\) & \citet{zhang2020perceptual} \\
		GANetic-loss DCGAN & \(5.96\) & \citet{akhmedova2024ganetic} \\
		Brownian SDE baseline & \(7.49\pm0.0142\) & This work \\
		GFDM & \(7.34\pm0.0241\) & This work, following \citet{nobis2024generative} \\
		\bottomrule
	\end{tabular}
\end{table}

Figure~\ref{fig:mnist_samples} in Appendix \ref{app:largefigs} shows class-conditional MNIST samples generated by SVDM at \(H=0.9\) for \(N\in\{2,4,6\}\). This Hurst parameter gives the best quantitative result in Table~\ref{tab:fid_mnist}. At \(N=2\), the generated digits are sharp and visually stable across classes. At \(N=4\), most digits remain recognizable, but stroke thickness and local shape deformation become less consistent. At \(N=6\), the degradation is more pronounced. This qualitative trend matches the FID scores and supports the conclusion that, in the persistent regime, a small Markovian lift is sufficient and often preferable.

\subsection{Image Generation on CIFAR-10}\label{subsec:cifar10}

We also conduct an experiment on CIFAR-10 to test whether the proposed Volterra noising mechanism extends beyond grayscale digit generation. Following the MNIST findings, we use a persistent small-lift configuration with $H=0.9$ and $N=2$. The Gaussian quadrature and finite-difference parameters are the same as before.

Figure~\ref{fig:cifar10_samples} in Appendix \ref{app:largefigs} shows \(600\) generated samples arranged in a \(30\times20\) grid. The samples cover multiple CIFAR-10 categories, including animals, vehicles, ships, and aircraft. The FID is approximately \(9.5\) under this configuration. This result should be interpreted as a preliminary validation rather than a fully optimized CIFAR-10 benchmark. It nevertheless supports the main empirical observation from MNIST: persistent Volterra perturbations can be effective with a small Markovian lift, while avoiding the stiffness and covariance-conditioning issues that arise for larger \(N\).

Table~\ref{tab:cifar10_context} provides a contextual comparison with representative CIFAR-10 generative results. The comparison is not protocol-matched, and the SVDM experiment is not yet optimized for CIFAR-10. Nevertheless, it supports the qualitative conclusion from MNIST: persistent Volterra perturbations can be useful with a small Markovian lift.

\begin{table}[t]
	\centering
	\captionsetup{justification=justified, singlelinecheck=false}
	\caption{Contextual comparison of CIFAR-10 FID values. The numbers are not strictly protocol-matched: architectures, training budgets, conditioning, EMA usage, sample counts, and sampling steps differ across papers. The SVDM result is a preliminary single-configuration experiment.}
	\label{tab:cifar10_context}
	\begin{tabular}{lcc}
		\toprule
		\textbf{Model} & \textbf{CIFAR-10 FID \((\downarrow)\)} & \textbf{Source} \\
		\midrule
		SVDM, \(H=0.9,N=2\) & \(9.5\) & This work \\
		GFDM, FVP \(H=0.9,K=2\) & \(8.99\) & \citet{nobis2024generative} \\
		Brownian VP retrained baseline & \(4.85\) & \citet{nobis2024generative} \\
		Brownian VE retrained baseline & \(5.20\) & \citet{nobis2024generative} \\
		DDPM & \(3.17\) & \citet{ho2020denoising} \\
		Score-SDE / NCSN++ & \(2.20\) & \citet{song2020score} \\
		EDM, unconditional & \(1.97\) & \citet{karras2022elucidating} \\
		PFGM++, unconditional & \(1.91\) & \citet{xu2023pfgmpp} \\
		MDSS with DDIM, 100 steps & \(3.86\) & \citet{ren2024mdss} \\
		SFERD, one-step distillation & \(5.31\) & \citet{zhou2024sferd} \\
		\bottomrule
	\end{tabular}
\end{table}

\section{Conclusion}\label{sec:conclusion}

This paper introduced Volterra generative models, a score-based generative framework in which the standard Brownian forward perturbation is replaced by a stochastic Volterra perturbation. The Hurst parameter \(H\) provides a principled way to control the temporal structure of the noising mechanism, ranging from rough perturbations when \(H<\frac12\) to persistent perturbations when \(H>\frac12\). To make the resulting non-Markovian and non-semimartingale forward process tractable, we developed finite-dimensional Markovian approximations of the fractional kernel. On MNIST, persistent Volterra perturbations with small lift sizes substantially improve sample quality over the Brownian baseline and GFDM in the tested configurations. A preliminary CIFAR-10 experiment further suggests that the same small-lift persistent regime can be extended to natural-image generation, although the CIFAR-10 result is not yet optimized.

 Several limitations remain. The smooth-regime approximation still relies on finite differences, which introduce signed weights and cancellation. Future work should study better-conditioned exponential approximations, quantify the sampling bias induced by covariance regularization and bridge-based updates for stiff larger lifts, adaptively select \(H\), the lift size, and the anchor factor, and investigate whether a meaningful limiting reverse-time object exists as the number of Markovian factors tends to infinity.

\subsubsection*{Acknowledgments}
Bingyan Han is partially supported by The Hong Kong University of Science and Technology (Guangzhou) Start-up Fund G0101000197. This work is also supported by the HPC AI Intelligent Computing Platform of The Hong Kong University of Science and Technology (Guangzhou). We thank \cite{nobis2024generative} for making their source code publicly available, which facilitated our numerical implementation and comparison.

%------------------------------------------------------------------

\bibliography{refs.bib}
\bibliographystyle{tmlr}

\appendix

\section{Gaussian Quadrature}
This appendix recalls the Gaussian quadrature facts used in the smooth-regime error analysis. We rely on Gaussian quadrature and Peano kernels from \citet[Chapters~4 and~6]{brass2011quadrature}. The specific Peano-kernel bound used below follows \citet[Theorem~2]{brass1993numerical}, and the Bernoulli function maximum is taken from \citet[Theorem~1]{lehmer1940maxima}.

Let $[u,v]$ be a finite non-degenerate interval and let $w:[u,v]\to(0,\infty)$ be continuous. The Gaussian quadrature rule of level $m$ for the weight $w$ consists of nodes $x_1,\ldots,x_m\in(u,v)$ and positive weights $\eta_1,\ldots,\eta_m$ such that
\[
\int_u^v p(x)w(x)\,\mathrm{d}x
=
\sum_{k=1}^m\eta_k p(x_k)
\]
for every polynomial $p$ of degree at most $2m-1$. Equivalently, the nodes are the roots of the degree-$m$ orthogonal polynomial associated with the inner product
\[
\langle f,g\rangle_w
:=
\int_u^v f(x)g(x)w(x)\,\mathrm{d}x .
\]
The existence of the orthogonal polynomial and the location of its roots follow from \citet[Theorems~A.1--A.2]{brass2011quadrature}; the Gaussian exactness and positivity are stated in \citet[Theorem~6.1.2]{brass2011quadrature}. The corresponding weights are uniquely determined and may be represented by the standard Gaussian-weight formulas in \citet[Theorem~6.1.3]{brass2011quadrature}. In numerical implementations, the nodes and weights are typically computed by the Golub--Welsch algorithm~\citep{golub1969calculation}.

We use the Peano representation for quadrature remainders from \citet[Theorem~4.2.5]{brass2011quadrature}. Since an $m$-point Gaussian quadrature rule is exact for polynomials of degree at most $2m-1$, its remainder admits the representation
\begin{equation}
	\int_u^v f(x)w(x)\,\mathrm{d}x
	-
	\sum_{k=1}^m \eta_k f(x_k)
	=
	\int_u^v f^{(2m)}(x)K_{2m}(x)\,\mathrm{d}x ,
	\label{eq:peano_error}
\end{equation}
for $f\in C^{2m}([u,v])$, where $K_{2m}$ is the Peano kernel associated with the quadrature remainder. The following bound for this Peano kernel is a standard Gaussian-quadrature estimate:
\begin{equation}
	\sup_{x\in[u,v]}|K_{2m}(x)|
	\le
	\frac{(2\pi)^{2m}}{(2m)!}
	\left(\frac{v-u}{2}\right)^{2m}
	\sup_{x\in[-1,1]}|B_{2m}(x)|
	\sup_{x\in[u,v]} w(x),
	\label{eq:peano_kernel_bound}
\end{equation}
where $B_{2m}$ is the Bernoulli function~\citep[Theorem~2]{brass1993numerical}. We use the Fourier-series convention
\[
B_s(x)
=
-2\sum_{\ell=1}^{\infty}
\frac{
	\cos\!\left(2\pi \ell x-\frac{\pi s}{2}\right)
}{
	(2\pi \ell)^s
} .
\]
For even $s$, \citet[Theorem~1]{lehmer1940maxima} gives
\[
\sup_{x\in[-1,1]}|B_s(x)|
=
\sup_{x\in[0,1]}|B_s(x)|
=
\frac{2\zeta(s)}{(2\pi)^s}.
\]
In particular,
\[
\sup_{x\in[-1,1]}|B_{2m}(x)|
=
\frac{2\zeta(2m)}{(2\pi)^{2m}}
\le
\frac{\pi^2}{3(2\pi)^{2m}},
\]
because Riemann zeta function $\zeta(2m)\le \zeta(2)=\pi^2/6$ for $m\ge1$.

%%%%%%%%%%%%%%%%%%

\section{Proofs of Results}
\subsection{Proof of Theorem \ref{thm:rough_quadrature_generic}}
\begin{proof}
	By \eqref{eq:kernel_error_to_process}, it suffices to bound the $L^2(0,T)$ error between $G_H$ and $\widehat{G}_H$. Decompose the $\gamma$-integral into $(0,\xi_0)$, $[\xi_0,\xi_n]$, and $(\xi_n,\infty)$, and write the corresponding errors as $E_L$, $E_M$, and $E_R$. Since $(x+y+z)^2\le3(x^2+y^2+z^2)$,
	\[
	\int_0^T|G_H(t)-\widehat{G}_H(t)|^2\,\mathrm{d}t
	\le
	3\int_0^T\left(|E_L(t)|^2+|E_M(t)|^2+|E_R(t)|^2\right)\mathrm{d}t .
	\]
	For the left tail, the choice of $\omega_0$ gives
	\[
	E_L(t)
	=
	c_H\int_0^{\xi_0}
	(e^{-t\gamma}-1)\gamma^{-H-\frac12}\,\mathrm{d}\gamma .
	\]
	Using $|e^{-x}-1|\le x$ for $x\ge0$,
	\[
	|E_L(t)|
	\le
	\frac{c_Ht}{\frac32-H}\xi_0^{\frac32-H}.
	\]
	Therefore,
	\[
	\int_0^T |E_L(t)|^2\,\mathrm{d}t
	\le
	\frac{c_H^2T^3}{3(\frac32-H)^2}
	a^{3-2H}
	\exp\!\left(-\frac{2\alpha}{A_-}\sqrt{N}\right).
	\]
	For the right tail
	\begin{equation*}
		E_R(t) = c_H \int_{\xi_n}^\infty e^{-\gamma t}\gamma^{-H-\frac{1}{2}}\,\mathrm{d}\gamma,
	\end{equation*}
	we regard its squared $L^2$ norm as a double integral. Then, Fubini's theorem and the inequality $\gamma+\eta\ge2\sqrt{\gamma\eta}$ imply
	\[
	\int_0^T |E_R(t)|^2\,\mathrm{d}t
	\le
	\frac{c_H^2}{2H^2}\xi_n^{-2H}
	=
	\frac{c_H^2}{2H^2}
	b^{-2H}
	\exp\!\left(-\frac{2\alpha}{A_-}\sqrt{N}\right).
	\]
	
	For the middle interval, we use the same argument as in
	\citet[Lemmas~2.8--2.9]{bayer2023markovian}, with the actual
	geometric ratio
	\[
	r_{-,N}:=\left(\frac{\xi_n}{\xi_0}\right)^{1/n}
	\]
	in place of \(e^{\alpha\beta}\). More precisely, on each interval
	\([\xi_j,\xi_{j+1}]\), the single-interval Gaussian quadrature estimate gives
	\[
	|E_j(t)|
	\le
	\left(\frac{5\pi^3}{18}\right)^{1/2}
	\frac{c_H}{2^{2m+1}m^H}
	t^{H-\frac12}
	\left(
	\frac{\xi_{j+1}}{\xi_j}-1
	\right)^{2m+1}.
	\]
	Since the grid is geometric, \(\xi_{j+1}/\xi_j=r_{-,N}\) for all
	\(j=0,\ldots,n-1\). Hence
	\[
	|E_M(t)|
	\le
	n
	\left(\frac{5\pi^3}{18}\right)^{1/2}
	\frac{c_H}{2^{2m+1}m^H}
	t^{H-\frac12}
	(r_{-,N}-1)^{2m+1}.
	\]
	Squaring and integrating over \([0,T]\) yields
	\[
	\int_0^T |E_M(t)|^2\,\mathrm{d}t
	\le
	\frac{5\pi^3}{36}
	\frac{c_H^2T^{2H}}{H}
	\frac{n^2}{m^{2H}}
	\left(
	\frac{r_{-,N}-1}{2}
	\right)^{4m+2}.
	\]
	
	Combining the three estimates and applying \eqref{eq:kernel_error_to_process} proves \eqref{eq:rough_generic_error}.
\end{proof}

\subsection{Proof of Theorem~\ref{thm:smooth_hybrid_generic}}
\label{app:gaussian_quadrature_smooth}

\begin{lemma}[Single-interval quadrature error]
	\label{lem:smooth_single_interval}
	Let $H\in(\frac12,1)$ and let $(\gamma_k,\omega_k)_{k=1}^m$ be the level-$m$ Gaussian quadrature nodes and weights on $[u,v]$ with respect to the weight $w_H(\gamma)=c_H\gamma^{\frac12-H}$. Then, for all $t>0$,
	\[
	\left|
	c_H\int_u^v t e^{-t\gamma}\gamma^{\frac12-H}\,\mathrm{d}\gamma
	-
	\sum_{k=1}^m\omega_k t e^{-\gamma_k t}
	\right|
	\le
	\frac{7\pi^{3/2}}{6}
	\frac{c_Hm^{1-H}}{2^{2m+1}}
	t^{H-\frac12}
	\left(\frac{v}{u}-1\right)^{2m+1}.
	\]
\end{lemma}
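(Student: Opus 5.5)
We apply the Peano representation \eqref{eq:peano_error} to $f_t(\gamma):=t e^{-t\gamma}$ on $[u,v]$ with weight $w_H(\gamma)=c_H\gamma^{\frac12-H}$. Since $f_t^{(2m)}(\gamma)=t^{2m+1}e^{-t\gamma}$, the quadrature error equals $\int_u^v t^{2m+1}e^{-t\gamma}K_{2m}(\gamma)\,\mathrm{d}\gamma$. Hence it is bounded by $\sup|K_{2m}|\cdot t^{2m+1}\int_u^v e^{-t\gamma}\,\mathrm{d}\gamma$, and a fortiori by $\sup|K_{2m}|\cdot t^{2m+1}(v-u)e^{-tu}$.

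For the kernel we use \eqref{eq:peano_kernel_bound} together with the Bernoulli bound $\sup|B_{2m}|\le \pi^2/(3(2\pi)^{2m})$. The factors $(2\pi)^{2m}$ cancel, and since $\gamma^{\frac12-H}$ is decreasing we have $\sup_{[u,v]}w_H=c_Hu^{\frac12-H}$. This gives
\[
\sup|K_{2m}|\le \frac{\pi^2}{3(2m)!}\Big(\frac{v-u}{2}\Big)^{2m}c_Hu^{\frac12-H},
\]
so the error is at most
\[
\frac{\pi^2 c_H}{3(2m)!\,2^{2m}}(v-u)^{2m+1}u^{\frac12-H}\,t^{2m+1}e^{-tu}.
\]

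Next we rewrite this in terms of the ratio $v/u-1$. Writing $(v-u)^{2m+1}=u^{2m+1}(v/u-1)^{2m+1}$, the bound depends on $t$ through
\[
(tu)^{2m+1}e^{-tu}\,u^{\frac12-H}=t^{H-\frac12}\,(tu)^{2m+\frac32-H}e^{-tu}.
\]
Maximizing $x^{p}e^{-x}$ over $x>0$ with $p=2m+\frac32-H$ gives the value $p^pe^{-p}$. Therefore the error is at most
\[
\frac{\pi^2c_H}{3\cdot 2^{2m}}\,\frac{p^pe^{-p}}{(2m)!}\,t^{H-\frac12}\Big(\frac vu-1\Big)^{2m+1}.
\]

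The main step, and the only delicate one, is to show that $p^pe^{-p}/(2m)!\le C\,m^{1-H}$ with a constant compatible with the target $\frac{7\pi^{3/2}}{6}\cdot\frac12$. We will use Stirling's lower bound $(2m)!\ge\sqrt{4\pi m}\,(2m)^{2m}e^{-2m}$. Then
\[
\frac{p^pe^{-p}}{(2m)!}\le \frac{(2m)^{\frac32-H}(1+\tfrac{3/2-H}{2m})^{p}e^{-(\frac32-H)}}{\sqrt{4\pi m}}.
\]
Using $(1+s/2m)^{2m}\le e^{s}$, the ratio $(1+s/2m)^p$ is at most $e^{s}(1+s/2m)^{s}$ with $s=\frac32-H\in(\frac12,1)$, so it is bounded by a small absolute constant. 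This leaves a bound of the form const$\cdot m^{1-H}/\sqrt{\pi}$. Collecting $\pi^2/\sqrt{\pi}=\pi^{3/2}$, the constant $\frac{7}{6}$ and the factor $2^{-(2m+1)}$ should follow after bounding $2^{\frac32-H}\le 2$, $(1+s/2m)^s\le 3/2$, and similar elementary numeric estimates. We expect to have to check this numerical bookkeeping carefully, perhaps treating $m=1$ separately, since it is the only place where slack could fail.
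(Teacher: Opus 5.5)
Your proposal is correct and follows the paper's proof essentially step for step. The steps match: the Peano representation with $\sup_{[u,v]}w_H=c_Hu^{\frac12-H}$, the rewriting $(v-u)^{2m+1}=u^{2m+1}(v/u-1)^{2m+1}$, maximizing $x^pe^{-x}$ at $x=p=2m+\frac32-H$ (the paper's inequality $e^{-x}\le(\eta/e)^\eta x^{-\eta}$), and Stirling's lower bound for $(2m)!$. The final bookkeeping closes without a separate $m=1$ case: cancel the $e^{s}$ from $(1+\frac{s}{2m})^{2m}\le e^{s}$ against $e^{-s}$, where $s=\frac32-H$, and you then need $(2m+s)^s\le\frac72 m^s$. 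The paper gets this from $(2+\frac{s}{m})^s\le\frac72$, and your estimates $2^s\le 2$ and $(1+\frac{s}{2m})^s\le\frac32$ give the stronger bound $3m^s$.
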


\begin{proof}
	Let $f(\gamma)=t e^{-t\gamma}$. Then $f^{(2m)}(\gamma)=t^{2m+1}e^{-t\gamma}$. Applying \eqref{eq:peano_error}--\eqref{eq:peano_kernel_bound} with $w_H$ and using that $\gamma^{\frac12-H}$ is decreasing on $[u,v]$, we obtain
	\begin{align*}
		&\left|
		c_H\int_u^v t e^{-t\gamma}\gamma^{\frac12-H}\,\mathrm{d}\gamma
		-
		\sum_{k=1}^m\omega_k t e^{-\gamma_k t}
		\right| \\
		&\qquad\le
		\frac{\pi^2c_H}{3\cdot 2^{2m}(2m)!}
		t^{2m+1}e^{-tu}
		u^{\frac12-H}(v-u)^{2m+1}.
	\end{align*}
	Writing $(v-u)^{2m+1}=u^{2m+1}(v/u-1)^{2m+1}$ and using
	\[
	e^{-x}
	\le
	\left(\frac{\eta}{e}\right)^\eta x^{-\eta},
	\qquad
	x>0,\quad \eta>0,
	\]
	with $\eta=2m+\frac32-H$ and $x=tu$, gives
	\[
	t^{2m+1}e^{-tu}u^{2m+\frac32-H}
	\le
	\left(
	\frac{2m+\frac32-H}{e}
	\right)^{2m+\frac32-H}
	t^{H-\frac12}.
	\]
	Stirling's lower bound $(2m)!>\sqrt{4\pi m}(2m/e)^{2m}$, together with
	\[
	\left(1+\frac{\frac32-H}{2m}\right)^{2m}\le e^{\frac32-H},
	\qquad
	\left(2+\frac{3}{2m}-\frac{H}{m}\right)^{\frac32-H}\le \frac72,
	\]
	for $m\ge1$ and $H\in(\frac12,1)$, yields the stated constant.
\end{proof}

\begin{lemma}[Middle-interval quadrature error]
	\label{lem:smooth_middle_interval}
	Let $r_{+,N}=(\xi_n/\xi_0)^{1/n}$. In the setting of Theorem~\ref{thm:smooth_hybrid_generic},
	\[
	\int_0^T
	\left|
	c_H\int_{\xi_0}^{\xi_n}t e^{-t\gamma}\gamma^{\frac12-H}\,\mathrm{d}\gamma
	-
	\sum_{i=1}^{mn}\omega_i t e^{-\gamma_i t}
	\right|^2
	\mathrm{d}t
	\le
	\frac{49\pi^3}{72}
	\frac{c_H^2T^{2H}}{H}
	n^2m^{2-2H}
	\left(
	\frac{r_{+,N}-1}{2}
	\right)^{4m+2}.
	\]
\end{lemma}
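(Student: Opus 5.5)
The plan is to mirror the middle-interval step in the proof of Theorem~\ref{thm:rough_quadrature_generic}, with Lemma~\ref{lem:smooth_single_interval} as the single-interval estimate.

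First, I will split the middle integral over the geometric grid. The integral $c_H\int_{\xi_0}^{\xi_n}te^{-t\gamma}\gamma^{\frac12-H}\,\mathrm d\gamma$ decomposes as $\sum_{j=0}^{n-1}c_H\int_{\xi_j}^{\xi_{j+1}}$. The quadrature sum $\sum_{i=1}^{mn}\omega_ite^{-\gamma_it}$ is, by construction, the concatenation of the $n$ local $m$-point Gaussian rules for $w_H$ on the intervals $[\xi_j,\xi_{j+1}]$. The error therefore equals $\sum_{j=0}^{n-1}E_j(t)$, where $E_j$ is the single-interval remainder on $[\xi_j,\xi_{j+1}]$.

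Second, I will apply Lemma~\ref{lem:smooth_single_interval} on each subinterval with $u=\xi_j$ and $v=\xi_{j+1}$. Since the grid is geometric, $v/u=r_{+,N}$ for every $j$, so each term is bounded by the same quantity:
\[
|E_j(t)|\le \frac{7\pi^{3/2}}{6}\frac{c_Hm^{1-H}}{2^{2m+1}}t^{H-\frac12}(r_{+,N}-1)^{2m+1}.
\]
The triangle inequality then gives $|E_M(t)|\le n$ times this bound. Equivalently, $|E_M(t)|\le n\frac{7\pi^{3/2}}{6}c_Hm^{1-H}t^{H-\frac12}\,\frac12\bigl(\frac{r_{+,N}-1}{2}\bigr)^{2m+1}$, since $2^{-(2m+1)}(r-1)^{2m+1}=\bigl(\frac{r-1}{2}\bigr)^{2m+1}$.

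Third, I will square and integrate over $[0,T]$. Squaring gives the constant $\frac{49\pi^3}{36}\cdot\frac14\,c_H^2n^2m^{2-2H}\bigl(\frac{r_{+,N}-1}{2}\bigr)^{4m+2}$ multiplied by $t^{2H-1}$. Since $\int_0^Tt^{2H-1}\,\mathrm dt=T^{2H}/(2H)$, this yields $\frac{49\pi^3}{288}\frac{c_H^2T^{2H}}{H}\,n^2m^{2-2H}\bigl(\frac{r_{+,N}-1}{2}\bigr)^{4m+2}$. That is at most the claimed $\frac{49\pi^3}{72}$, so the stated constant is conservative.

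The only real point to check is bookkeeping. I need the global relabelling to coincide exactly with the union of the local rules, with no overlap at shared endpoints. This holds because Gaussian nodes are interior to each interval. I also need the lemma's bound to be uniform in $j$, which the geometric grid guarantees. I do not expect any genuine obstacle: summing with the crude triangle inequality (rather than using any orthogonality) is what produces the factor $n^2$.
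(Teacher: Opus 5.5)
Your argument is the same as the paper's. You split the middle error over the geometric grid into the $n$ local remainders and apply Lemma~\ref{lem:smooth_single_interval} with $v/u=r_{+,N}$ uniformly in $j$. You then sum by the triangle inequality, square, and use $\int_0^T t^{2H-1}\,\mathrm{d}t=T^{2H}/(2H)$.

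There is one arithmetic slip. The factor $\frac12$ in your ``equivalent'' form of the pointwise bound is spurious. As you note yourself, $2^{-(2m+1)}(r_{+,N}-1)^{2m+1}=\bigl(\frac{r_{+,N}-1}{2}\bigr)^{2m+1}$ exactly, with no extra $\frac12$. So the correct pointwise bound is $n\,\frac{7\pi^{3/2}}{6}\,c_H m^{1-H}t^{H-\frac12}\bigl(\frac{r_{+,N}-1}{2}\bigr)^{2m+1}$. Squaring gives the factor $\frac{49\pi^3}{36}$, not $\frac{49\pi^3}{144}$. After integration the constant is exactly $\frac{49\pi^3}{72}$, not $\frac{49\pi^3}{288}$.

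The lemma still follows from your argument once this is fixed. However, the stated constant is precisely what this argument produces, not conservative by a factor of $4$. Your intermediate bound with the extra $\frac12$ does not follow from Lemma~\ref{lem:smooth_single_interval}.
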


\begin{proof}
	On each interval $[\xi_j,\xi_{j+1}]$, Lemma~\ref{lem:smooth_single_interval} applies with $v/u=\xi_{j+1}/\xi_j=r_{+,N}$. Summing the pointwise bounds over $j=0,\ldots,n-1$ gives
	\[
	\left|
	c_H\int_{\xi_0}^{\xi_n}t e^{-t\gamma}\gamma^{\frac12-H}\,\mathrm{d}\gamma
	-
	\sum_{i=1}^{mn}\omega_i t e^{-\gamma_i t}
	\right|
	\le
	\frac{7\pi^{3/2}}{6}
	\frac{c_Hn m^{1-H}}{2^{2m+1}}
	t^{H-\frac12}
	(r_{+,N}-1)^{2m+1}.
	\]
	Squaring and integrating over $t\in[0,T]$ yields the result, since
	\[
	\int_0^T t^{2H-1}\,\mathrm{d}t
	=
	\frac{T^{2H}}{2H}.
	\]
\end{proof}

\begin{lemma}[Global quadrature error]
	\label{lem:smooth_global_quadrature}
	In the setting of Theorem~\ref{thm:smooth_hybrid_generic},
	\[
	\int_0^T
	|G_H(t)-\overline{G}_H(t)|^2\,\mathrm{d}t
	\le
	Q_{N,T}^{+},
	\]
	where $Q_{N,T}^{+}$ is defined in \eqref{eq:Q_plus_generic}.
\end{lemma}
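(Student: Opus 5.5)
We mirror the proof of Theorem~\ref{thm:rough_quadrature_generic}. By \eqref{eq:fractional_kernel_largeH} and the definition \eqref{eq:Gbar_largeH}, we split the $\gamma$-integral into $(0,\xi_0)$, $[\xi_0,\xi_n]$ and $(\xi_n,\infty)$ and write $G_H-\overline{G}_H=E_L+E_M+E_R$. Then $(x+y+z)^2\le 3(x^2+y^2+z^2)$ gives
\[
\int_0^T|G_H-\overline{G}_H|^2\,\mathrm{d}t\le 3\int_0^T\left(|E_L|^2+|E_M|^2+|E_R|^2\right)\mathrm{d}t .
\]
The constants in $Q^+_{N,T}$ should then appear as three times the individual bounds: $\tfrac{3}{5}=3\cdot\tfrac15$, $\tfrac34=3\cdot\tfrac14$, and $\tfrac{49}{24}=3\cdot\tfrac{49}{72}$.

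\textbf{Middle term.} We apply Lemma~\ref{lem:smooth_middle_interval} directly. It gives the bound $\tfrac{49\pi^3}{72}c_H^2T^{2H}H^{-1}n^2m^{2-2H}\bigl((r_{+,N}-1)/2\bigr)^{4m+2}$, and multiplying by $3$ yields the third term of $Q^+_{N,T}$.

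\textbf{Left tail.} By the choice of $\omega_0$,
\[
E_L(t)=c_H\int_0^{\xi_0} t\,(e^{-t\gamma}-1)\gamma^{\frac12-H}\,\mathrm{d}\gamma .
\]
Using $|e^{-x}-1|\le x$, we get $|E_L(t)|\le c_H t^2\xi_0^{\frac52-H}/(\tfrac52-H)$. Integrating $t^4$ over $[0,T]$ gives $c_H^2T^5\xi_0^{5-2H}/\bigl(5(\tfrac52-H)^2\bigr)$. Substituting $\xi_0^{5-2H}=a^{5-2H}\exp\bigl(-2\alpha\sqrt N/A_+\bigr)$, which follows since $(5-2H)/(\tfrac52-H)=2$, yields the first term after multiplying by $3$.

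\textbf{Right tail.} Here $E_R(t)=c_H\int_{\xi_n}^\infty te^{-\gamma t}\gamma^{\frac12-H}\,\mathrm{d}\gamma$. This is the step needing the most care, because of the extra factor $t$. We write $\int_0^T|E_R|^2\,\mathrm{d}t$ as a triple integral and use Fubini, extending the $t$-integral to $(0,\infty)$, where
\[
\int_0^\infty t^2e^{-(\gamma+\eta)t}\,\mathrm{d}t=\frac{2}{(\gamma+\eta)^3}.
\]
This gives
\[
\int_0^T|E_R|^2\,\mathrm{d}t\le 2c_H^2\iint_{[\xi_n,\infty)^2}\frac{(\gamma\eta)^{\frac12-H}}{(\gamma+\eta)^3}\,\mathrm{d}\gamma\,\mathrm{d}\eta .
\]
Applying $(\gamma+\eta)^3\ge 8(\gamma\eta)^{3/2}$ factorizes the integrand as $\tfrac14\bigl(\int_{\xi_n}^\infty\gamma^{-1-H}\,\mathrm{d}\gamma\bigr)^2=\tfrac14\,\xi_n^{-2H}/H^2$. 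Hence $\int_0^T|E_R|^2\,\mathrm{d}t\le c_H^2\xi_n^{-2H}/(4H^2)$. Since $\xi_n^{-2H}=b^{-2H}\exp\bigl(-2\alpha\sqrt N/A_+\bigr)$, multiplying by $3$ gives the second term. If the arithmetic produced a different factor here, we would tune the AM–GM split, but the target constant $\tfrac34$ is consistent with this computation.

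Summing the three estimates gives $Q^+_{N,T}$, which proves the lemma.
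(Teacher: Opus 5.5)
Your proposal is correct and follows the paper's proof step for step. You use the same three-way split with the factor $3$, the bound $|e^{-x}-1|\le x$ for the left tail, Fubini with $\int_0^\infty t^2e^{-t(\gamma+\eta)}\,\mathrm{d}t=2/(\gamma+\eta)^3\le 1/(4(\gamma\eta)^{3/2})$ for the right tail, and Lemma~\ref{lem:smooth_middle_interval} for the middle term. Your constants ($3\cdot\frac15$, $3\cdot\frac14$, $3\cdot\frac{49}{72}$) and the exponent identities for $\xi_0^{5-2H}$ and $\xi_n^{-2H}$ all check out.
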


\begin{proof}
	Decompose the $\gamma$-integral into $(0,\xi_0)$, $[\xi_0,\xi_n]$, and $(\xi_n,\infty)$, and write the corresponding errors as $E_L(t)$, $E_M(t)$, and $E_R(t)$. Then
	\[
	|G_H(t)-\overline{G}_H(t)|^2
	\le
	3\left(|E_L(t)|^2+|E_M(t)|^2+|E_R(t)|^2\right).
	\]
	
	For the left tail, the zero node gives
	\[
	E_L(t)
	=
	c_H\int_0^{\xi_0}
	t(e^{-t\gamma}-1)\gamma^{\frac12-H}\,\mathrm{d}\gamma .
	\]
	Using $|e^{-x}-1|\le x$,
	\[
	|E_L(t)|
	\le
	\frac{c_Ht^2}{\frac52-H}\xi_0^{\frac52-H}.
	\]
	Hence
	\[
	\int_0^T |E_L(t)|^2\,\mathrm{d}t
	\le
	\frac{c_H^2T^5}{5(\frac52-H)^2}
	a^{5-2H}
	\exp\!\left(-\frac{2\alpha}{A_+}\sqrt{N}\right).
	\]
	
	For the right tail, Fubini's theorem gives
	\[
	\int_0^T |E_R(t)|^2\,\mathrm{d}t
	=
	c_H^2
	\int_{\xi_n}^\infty
	\int_{\xi_n}^\infty
	\left[
	\int_0^T t^2e^{-t(\gamma+\eta)}\,\mathrm{d}t
	\right]
	\gamma^{\frac12-H}\eta^{\frac12-H}\,\mathrm{d}\gamma\,\mathrm{d}\eta .
	\]
	Since
	\[
	\int_0^\infty t^2e^{-t(\gamma+\eta)}\,\mathrm{d}t
	=
	\frac{2}{(\gamma+\eta)^3}
	\le
	\frac{1}{4(\gamma\eta)^{3/2}},
	\]
	we obtain
	\[
	\int_0^T |E_R(t)|^2\,\mathrm{d}t
	\le
	\frac{c_H^2}{4}
	\left(
	\int_{\xi_n}^\infty \gamma^{-H-1}\,\mathrm{d}\gamma
	\right)^2
	=
	\frac{c_H^2}{4H^2}
	b^{-2H}
	\exp\!\left(-\frac{2\alpha}{A_+}\sqrt{N}\right).
	\]
	The middle term is controlled by Lemma~\ref{lem:smooth_middle_interval}. Multiplying the three tail and middle estimates by the factor $3$ from the inequality at the start of the proof gives \eqref{eq:Q_plus_generic}.
\end{proof}

\begin{proof}[Proof of Theorem~\ref{thm:smooth_hybrid_generic}]
	By Lemma~\ref{lem:smooth_global_quadrature},
	\[
	\int_0^T|G_H(t)-\overline{G}_H(t)|^2\,\mathrm{d}t
	\le
	Q_{N,T}^{+}.
	\]
	It remains to control the finite-difference error. For the zero node,
	\[
	\left|
	t-\frac{1-e^{-\delta_1 t}}{\delta_1}
	\right|
	\le
	\frac{\delta_1t^2}{2},
	\qquad t\ge0.
	\]
	For each positive node, the mean-value theorem gives a point $\gamma_i'\in(\gamma_i-\delta_2,\gamma_i+\delta_1)$ such that
	\[
	\frac{
		e^{-(\gamma_i-\delta_2)t}
		-
		e^{-(\gamma_i+\delta_1)t}
	}{
		\delta_1+\delta_2
	}
	=
	t e^{-\gamma_i't}.
	\]
	Condition \eqref{eq:delta2_condition} ensures that $\gamma_i'\ge0$. Therefore
	\[
	\left|
	t e^{-\gamma_i t}-t e^{-\gamma_i't}
	\right|
	\le
	t^2\max\{\delta_1,\delta_2\}.
	\]
	Using the nonnegativity of the quadrature weights,
	\[
	|\overline{G}_H(t)-\widetilde{G}_H(t)|
	\le
	\frac{\omega_0\delta_1}{2}t^2
	+
	W_N^+t^2\max\{\delta_1,\delta_2\}
	=
	D_Nt^2.
	\]
	Consequently,
	\[
	\int_0^T
	|\overline{G}_H(t)-\widetilde{G}_H(t)|^2\,\mathrm{d}t
	\le
	D_N^2\int_0^T t^4\,\mathrm{d}t
	=
	\frac{T^5}{5}D_N^2.
	\]
	Combining this estimate with
	\[
	\int_0^T|G_H(t)-\widetilde{G}_H(t)|^2\,\mathrm{d}t
	\le
	2\int_0^T|G_H(t)-\overline{G}_H(t)|^2\,\mathrm{d}t
	+
	2\int_0^T|\overline{G}_H(t)-\widetilde{G}_H(t)|^2\,\mathrm{d}t
	\]
	proves \eqref{eq:kernel_error_largeH_hybrid}. The process-level bound \eqref{eq:process_error_largeH_hybrid} then follows directly from \eqref{eq:kernel_error_to_process}.
\end{proof}

\subsection{Proof of Proposition \ref{prop:optimal_score}}

\begin{proof}
	We prove the claim for the exact Gaussian conditioning coefficients, while the truncated pseudoinverse used in the implementation does not yield an exact score identity.
	
	Define
	\[
	\mathbf R_t
	:=
	\boldsymbol{\xi}_t
	-
	\rho(t)\mathbf X_0
	-
	\nu_x(t)\mathbf 1_D .
	\]
	By the definition of \(\boldsymbol{\xi}_t\) in \eqref{eq:residualized_primary},
	\[
	\mathbf R_t
	=
	\mathbf X_t
	-
	\rho(t)\mathbf X_0
	-
	\nu_x(t)\mathbf 1_D
	-
	\sum_{\ell=1}^M
	\eta_\ell(t)
	\left(
	\mathbf Y_t^{j_\ell}
	-
	u_{j_\ell}(t)\mathbf 1_D
	\right).
	\]
	Conditional on \(\mathbf X_0\), the vector
	\[
	\left(
	\mathbf R_t,
	\mathbf Y_t^{\mathcal J}
	-
	\mathbf u_{\mathcal J}(t)\otimes\mathbf 1_D
	\right)
	\]
	is Gaussian, since the augmented forward process is linear with deterministic
	coefficients. For each data coordinate, the covariance between the centered
	auxiliary vector and the residual innovation equals
	\[
	\mathbf c_{yx}(t)
	-
	\mathbf C_{yy}(t)\boldsymbol{\eta}(t)
	=
	\mathbf 0,
	\]
	by the definition
	\[
	\boldsymbol{\eta}(t)=\mathbf C_{yy}(t)^{-1}\mathbf c_{yx}(t).
	\]
	Different data coordinates are driven by independent Brownian components, so the
	same orthogonality holds blockwise for the \(D\)-dimensional variables. Hence
	\(\mathbf R_t\) is uncorrelated with
	\(\mathbf Y_t^{\mathcal J}\) conditional on \(\mathbf X_0\). Since the pair is
	jointly Gaussian, \(\mathbf R_t\) is independent of
	\(\mathbf Y_t^{\mathcal J}\) conditional on \(\mathbf X_0\). Moreover,
	\(\mathbf X_0\) is independent of the Brownian motion that generates both
	\(\mathbf R_t\) and \(\mathbf Y_t^{\mathcal J}\). Therefore
	\(\boldsymbol{\xi}_t=\rho(t)\mathbf X_0+\nu_x(t)\mathbf 1_D+\mathbf R_t\)
	is independent of \(\mathbf Y_t^{\mathcal J}\) marginally.
	
	Now consider the affine change of variables
	\[
	(\mathbf x,\mathbf y)
	\longmapsto
	(\boldsymbol{\xi},\mathbf y),
	\qquad
	\boldsymbol{\xi}
	=
	\mathbf x
	-
	\sum_{\ell=1}^M
	\eta_\ell(t)
	\left(
	\mathbf y^{j_\ell}
	-
	u_{j_\ell}(t)\mathbf 1_D
	\right),
	\]
	where
	\[
	\mathbf y
	=
	\left(
	\mathbf y^{j_1},\ldots,\mathbf y^{j_M}
	\right)
	\]
	uses the same block ordering as \(\mathbf Y_t^{\mathcal J}\). This map is
	invertible and has unit Jacobian determinant. Since
	\(\boldsymbol{\xi}_t\) and \(\mathbf Y_t^{\mathcal J}\) are independent, the
	joint density of \((\mathbf X_t,\mathbf Y_t^{\mathcal J})\) can be written as
	\[
	p_t(\mathbf x,\mathbf y)
	=
	p_t^\xi(\boldsymbol{\xi})\,q_t(\mathbf y),
	\]
	where \(p_t^\xi\) denotes the marginal density of \(\boldsymbol{\xi}_t\), and
	\(q_t\) denotes the forward-time marginal density of \(\mathbf Y_t^{\mathcal J}\).
	
	Differentiating this factorization gives
	\[
	\nabla_{\mathbf x}\log p_t(\mathbf x,\mathbf y)
	=
	\nabla_{\boldsymbol{\xi}}\log p_t^\xi(\boldsymbol{\xi}),
	\]
	because \(\partial\boldsymbol{\xi}/\partial\mathbf x=\mathbf I_D\). For the
	\(\ell\)-th auxiliary block,
	\[
	\frac{\partial\boldsymbol{\xi}}{\partial \mathbf y^{j_\ell}}
	=
	-\eta_\ell(t)\mathbf I_D,
	\]
	and hence the chain rule gives
	\[
	\nabla_{\mathbf y^{j_\ell}}\log p_t(\mathbf x,\mathbf y)
	=
	-\eta_\ell(t)
	\nabla_{\boldsymbol{\xi}}\log p_t^\xi(\boldsymbol{\xi})
	+
	\nabla_{\mathbf y^{j_\ell}}\log q_t(\mathbf y),
	\qquad
	\ell=1,\ldots,M.
	\]
	The augmented denoising score-matching objective \eqref{eq:augmented_loss}
	trains \(\mathbf s_\theta\) to approximate the marginal residual score
	\(\nabla_{\boldsymbol{\xi}}\log p_t^\xi\). Substituting this approximation into
	the preceding identities yields the score decomposition
	\[
	\nabla_{\mathbf z}\log p_t(\mathbf Z_t)
	=
	\mathbf S_\theta(\mathbf Z_t,t)
	+
	\left(
	\mathbf 0_D,
	\nabla_{\mathbf y}\log q_t(\mathbf Y_t^{\mathcal J})
	\right),
	\]
	when the score model is exact. This is precisely \eqref{eq:augmented_score_estimator}.
	
\end{proof}

\section{Noise Schedule}\label{app:noise}

\paragraph{Intrinsic raw scale.}
We first set the scale of the raw schedule. Let
\[
\kappa_{\mathrm{floor}}
:=
\min\{\kappa_i:\kappa_i>0,\ i\in\mathcal I\}.
\]
Define the aggregate forcing scale as
\begin{equation}
	\Psi
	:=
	|\psi_{i_\star}|
	+
	\sum_{j\in\mathcal J}
	\frac{\kappa_{i_\star}}{\max\{\kappa_j,\kappa_{\mathrm{floor}}\}}
	|\psi_j|.
	\label{eq:aggregate_forcing_scale}
\end{equation}
It can be understood as follows. First, the floor prevents division by zero. Second, for a positive-rate factor, the deterministic part of the dynamics suggests the quasi-stationary approximation $\mathbf Y_t^j \approx \frac{\mu(t)}{\kappa_j}\mathbf 1_D$. It motivates the use of \(\kappa_{i_\star}/\kappa_j\) in \eqref{eq:aggregate_forcing_scale}. Third, in the smooth regime, the weights \(\psi_j\) are signed and may cancel. Since such cancellation does not reflect the magnitude of intermediate deterministic coupling, we use absolute values in \eqref{eq:aggregate_forcing_scale}.

The intrinsic scale is then given by
\begin{equation}
	\beta^\star
	:=
	\frac{\kappa_{i_\star}}{\Psi}.
	\label{eq:beta_star}
\end{equation}
This balances the anchor mean-reversion scale \(\kappa_{i_\star}\) against the aggregate forcing scale \(\Psi\).

\paragraph{Raw schedule and diffusion normalization.}
Given a strength parameter \(S>0\), define the raw linear schedule
\begin{equation}
	h_{\min}^{\mathrm{raw}}
	:=
	0.05\,S\,\beta^\star,
	\qquad
	h_{\max}^{\mathrm{raw}}
	:=
	2\,S\,\beta^\star,
	\qquad
	h_{\mathrm{raw}}(t)
	:=
	h_{\min}^{\mathrm{raw}}
	+
	\left(
	h_{\max}^{\mathrm{raw}}-h_{\min}^{\mathrm{raw}}
	\right)\frac{t}{T}.
	\label{eq:h_raw_linear}
\end{equation}
The constants \(0.05\) and \(2\) determine the shape of the linear profile, while \(S\beta^\star\) determines its raw magnitude. We set $g_{\mathrm{raw}}(t):=\sqrt{h_{\mathrm{raw}}(t)}$. The terminal primary innovation variance induced by \(g_{\mathrm{raw}}\) is
\begin{equation}
	v_{\mathrm{raw}}(T)
	:=
	\int_0^T
	\left[
	\sum_{i\in\mathcal I}
	\psi_i e^{-\kappa_i(T-r)}
	g_{\mathrm{raw}}(r)
	\right]^2
	\mathrm d r .
	\label{eq:raw_terminal_variance}
\end{equation}
Assuming \(v_{\mathrm{raw}}(T)>0\), we normalize the diffusion coefficient $g(t)$ and set $\mu(t)$ as follows: 
\begin{equation}
	g(t)
	:=
	\frac{g_{\mathrm{raw}}(t)}{\sqrt{v_{\mathrm{raw}}(T)}}, \qquad \mu(t):=h_{\mathrm{raw}}(t).
	\label{eq:variance_normalized_h}
\end{equation}
If both \(\mu\) and \(g^2\) were rescaled by the same terminal-variance normalizer, then the global factor \(S\beta^\star\) would cancel from the final schedule. Hence, only \(g\) is normalized. The constants \(S\) and \(\beta^\star\) control the deterministic drift scale and the mean offset, while the stochastic innovation is normalized to a common terminal variance. Indeed, by construction,
\[
\int_0^T
\left[
\sum_{i\in\mathcal I}
\psi_i e^{-\kappa_i(T-r)}
g(r)
\right]^2
\mathrm d r
=
\frac{1}{v_{\mathrm{raw}}(T)}
\int_0^T
\left[
\sum_{i\in\mathcal I}
\psi_i e^{-\kappa_i(T-r)}
g_{\mathrm{raw}}(r)
\right]^2
\mathrm d r
=
1 .
\]
It remains valid even when the direct diffusion coefficient of the primary state vanishes as \(\bar\psi=0\).

\section{Gaussian-bridge Reconstruction Sampler}\label{app:bridge}
When $H > 1/2$ and the finite-difference lift is stiff, we update the auxiliary variables $\mathbf{Y}^{\mathcal J}$ by a Gaussian bridge step and then reconstruct the primary variable $\mathbf{X}$ from the conditional Gaussian representation. To introduce the sampling scheme of $\mathbf{Y}^{\mathcal J}$, we consider the \(Y\)-component of the augmented reverse SDE. Define
\[
\mathbf{K}_y
:=
\operatorname{diag}(\kappa_{j_1},\ldots,\kappa_{j_M}),
\qquad
\mathbf{L}_y(t)
:=
\mathbf{1}_M\otimes \bigl(g(t)\mathbf{I}_D\bigr),
\]
so that
\[
\mathbf{L}_y(t)\mathbf{L}_y(t)^\top
=
g(t)^2
\bigl(
\mathbf{1}_M\mathbf{1}_M^\top
\bigr)
\otimes
\mathbf{I}_D .
\]
The forward dynamics of $\mathbf{Y}_t^{\mathcal J}$ can be written as
\begin{equation}
	\mathrm{d}\mathbf{Y}_t^{\mathcal J}
	=
	\mathbf{a}_y(t,\mathbf{Y}_t^{\mathcal J})\,\mathrm{d}t
	+
	\mathbf{L}_y(t)\,\mathrm{d}\mathbf{B}_t,
	\qquad
	\mathbf{a}_y(t,\mathbf{y})
	:=
	-(\mathbf{K}_y\otimes\mathbf{I}_D)\mathbf{y}
	+
	\mu(t)\mathbf{1}_M\otimes\mathbf{1}_D .
	\label{eq:aux_forward_compact}
\end{equation}
Then the \(Y\)-component of the reverse SDE is
\begin{equation}\label{eq:aux_reverse_component}
	\begin{aligned}
		\mathrm{d}\overline{\mathbf{Y}}_t^{\mathcal J}
		= & 
		\left[
		-\mathbf{a}_y(T-t,\overline{\mathbf{Y}}_t^{\mathcal J})
		+
		\mathbf{L}_y(T-t)\mathbf{L}_y(T-t)^\top
		\nabla_{\mathbf y}\log p_{T-t}
		\bigl(
		\overline{\mathbf{X}}_t, \overline{\mathbf{Y}}_t^{\mathcal J}
		\bigr)
		\right]\mathrm{d}t \\
		& + \mathbf{L}_y(T - t)\,\mathrm{d}\overline{\mathbf{B}}_t.
	\end{aligned}
\end{equation}
In the same spirit of Proposition \ref{prop:optimal_score}, we separate the drift by Bayes' rule. Recall that \(q_t\) denotes the forward-time marginal density of \(\mathbf{Y}_t^{\mathcal J}\). Since $p_t(\mathbf{x},\mathbf{y}) = q_t(\mathbf{y})\,p_t(\mathbf{x}\mid\mathbf{y})$, we have
\begin{equation}
	\nabla_{\mathbf y}\log p_t(\mathbf{x},\mathbf{y})
	=
	\nabla_{\mathbf y}\log q_t(\mathbf{y})
	+
	\nabla_{\mathbf y}\log p_t(\mathbf{x}\mid\mathbf{y}).
	\label{eq:aux_score_bayes_split}
\end{equation}
Substituting \eqref{eq:aux_score_bayes_split} into \eqref{eq:aux_reverse_component} gives a natural decomposition of the drift, and the full dynamics are
\begin{equation}\label{eq:aux_separated}
	\begin{aligned}
		\mathrm{d}\overline{\mathbf{Y}}_t^{\mathcal J}
		= & 
		\left[
		-\mathbf{a}_y(T-t,\overline{\mathbf{Y}}_t^{\mathcal J})
		+
		\mathbf{L}_y(T-t)\mathbf{L}_y(T-t)^\top
		\nabla_{\mathbf y}\log q_{T-t}
		\bigl(\overline{\mathbf{Y}}_t^{\mathcal J}
		\bigr)
		\right]\mathrm{d}t \\
		& + \mathbf{L}_y(T - t)\,\mathrm{d}\overline{\mathbf{B}}_t \\
		& + \mathbf{L}_y(T-t)\mathbf{L}_y(T-t)^\top
		\nabla_{\mathbf y}\log p_{T-t}
		\bigl(
		\overline{\mathbf{X}}_t \mid \overline{\mathbf{Y}}_t^{\mathcal J}
		\bigr) \mathrm{d}t.
	\end{aligned}
\end{equation}
The first two lines are exactly the reverse SDE of the marginal auxiliary process. Since \(\mathbf{Y}_t^{\mathcal J}\) is linear Gaussian, its reverse transition from forward time \(t\) to \(s<t\) can be handled by the Gaussian bridge. More explicitly, define the cross-time covariance
\[
\mathbf{C}_{yy}(s, t)_{\ell k}
:=
\int_0^s
e^{-\kappa_{j_\ell}(s - u)}
e^{-\kappa_{j_k}(t-u)}
g(u)^2\,\mathrm{d}u,
\qquad s \le t.
\]
The Gaussian bridge of the marginal $\mathbf{Y}^{\mathcal J}$ is
\begin{equation}
	\mathbf{Y}_s^{\mathcal J}
	\,\big|\,
	\mathbf{Y}_t^{\mathcal J} =\mathbf{y}_t
	\sim
	\mathcal{N}
	\left(
	\mathbf{m}_{s\mid t}^{y},
	\mathbf{C}_{s\mid t}^{y}\otimes\mathbf{I}_D
	\right),
	\label{eq:auxiliary_joint_bridge_split}
\end{equation}
where
\begin{align}
	\mathbf{K}_{s,t}
	&:=
	\mathbf{C}_{yy}(s,t)\mathbf{C}_{yy}(t,t)^{\dagger},
	\notag\\
	\mathbf{m}_{s\mid t}^{y}
	&:=
	\mathbf{u}_{\mathcal J}(s)\otimes\mathbf{1}_D
	+
	\left(\mathbf{K}_{s,t}\otimes\mathbf{I}_D\right)
	\left[
	\mathbf{y}_t-\mathbf{u}_{\mathcal J}(t)\otimes\mathbf{1}_D
	\right],
	\notag\\
	\mathbf{C}_{s\mid t}^{y}
	&:=
	\mathbf{C}_{yy}(s,s)
	-
	\mathbf{C}_{yy}(s,t)\mathbf{C}_{yy}(t,t)^{\dagger}\mathbf{C}_{yy}(t,s).
	\label{eq:auxiliary_bridge_coefficients_split}
\end{align}
When \(\mathbf C_{yy}(t,t)\) is nonsingular, the Moore--Penrose inverse \(\dagger\) is the usual inverse and the bridge is exact. In degenerate Gaussian cases, the Moore--Penrose inverse gives the exact conditional Gaussian law on the support. In the numerical implementation, we replace it by the truncated pseudoinverse, which gives a stabilized approximation of the exact bridge.

Next, we note that the last line in \eqref{eq:aux_separated} is not included in the Gaussian bridge. By the augmented score decomposition in Proposition~\ref{prop:optimal_score}, the conditional auxiliary score is approximated as follows:
\[
\nabla_{\mathbf y}\log p_t(\mathbf{X}_t\mid\mathbf{Y}_t^{\mathcal J})
\approx
\left(
-\eta_1(t)\mathbf{s}_t,\ldots,-\eta_M(t)\mathbf{s}_t
\right) = 
-\boldsymbol{\eta}(t) \otimes \mathbf{s}_t,
\qquad
\mathbf{s}_t:=\mathbf{s}_\theta(\boldsymbol{\xi}_t,t).
\]

%%%%%%%%%%%%%%

Since
\[
\mathbf L_y(t)\mathbf L_y(t)^\top
=
g(t)^2
(\mathbf 1_M\mathbf 1_M^\top)\otimes\mathbf I_D,
\]
we have
\[
\begin{aligned}
	\mathbf L_y(t)\mathbf L_y(t)^\top
	\nabla_{\mathbf y}\log p_t(\mathbf X_t\mid\mathbf Y_t^{\mathcal J})
	&\approx
	-g(t)^2
	\left[
	(\mathbf 1_M\mathbf 1_M^\top)\otimes\mathbf I_D
	\right]
	\left[
	\boldsymbol{\eta}(t)\otimes\mathbf{s}_t
	\right] \\
	&=
	-g(t)^2
	\mathbf 1_M
	\bigl(\mathbf 1_M^\top\boldsymbol{\eta}(t)\bigr)
	\otimes\mathbf{s}_t .
\end{aligned}
\]
Equivalently, each auxiliary block receives the same correction
\[
-g(t)^2
\left(
\sum_{k=1}^M\eta_k(t)
\right)\mathbf{s}_t .
\]
Therefore, after drawing the Gaussian bridge proposal \eqref{eq:auxiliary_joint_bridge_split}, a first-order Euler step gives
\begin{equation}
	\mathbf{Y}_s^{j_\ell}
	\leftarrow
	\mathbf{Y}_s^{j_\ell}
	-
	g(t)^2
	\left(
	\sum_{k=1}^M\eta_k(t)
	\right)
	\mathbf{s}_t \; (t - s),
	\qquad
	\ell=1,\ldots,M .
	\label{eq:auxiliary_score_correction_split}
\end{equation}

Finally, we reconstruct the primary variable $\mathbf{X}$ as follows. The conditional Gaussian representation in \eqref{eq:xi_dist} implies that
\begin{equation}\label{eq:xit_conditional}
	\boldsymbol{\xi}_t
	=
	\rho(t)\mathbf X_0
	+
	\nu_x(t)\mathbf 1_D
	+
	\sigma_{x\mid y}(t)\boldsymbol{\varepsilon},
	\qquad
	\boldsymbol{\varepsilon}\sim\mathcal N(\mathbf 0,\mathbf I_D),
\end{equation}
where \(\sigma_{x\mid y}^2(t)=c_{x\mid y}(t)\). Hence, the $L^2$-optimal predictor satisfies

\[
\mathbf{s}^*(\boldsymbol{\xi},t)
=
\mathbb E
\left[
-
\frac{
	\boldsymbol{\xi}_t-\rho(t)\mathbf X_0-\nu_x(t)\mathbf 1_D
}{
	c_{x\mid y}(t)
}
\,\middle|\,
\boldsymbol{\xi}_t=\boldsymbol{\xi}
\right]
=
-\frac{1}{\sigma_{x\mid y}(t)}
\mathbb E[\boldsymbol{\varepsilon}\mid \boldsymbol{\xi}_t=\boldsymbol{\xi}].
\]

Since we train the network $\mathbf{s}_\theta(\boldsymbol{\xi}_t, t)$ to estimate $\mathbf{s}^*(\boldsymbol{\xi}, t)$, an estimator of $\mathbb E[\boldsymbol{\varepsilon}\mid \boldsymbol{\xi}_t=\boldsymbol{\xi}]$ is given by
\[
\boldsymbol{\varepsilon}_\theta(t)
:=
-\sigma_{x\mid y}(t)\mathbf{s}_\theta(\boldsymbol{\xi}_t, t).
\]
Together with the representation \eqref{eq:xit_conditional}, we obtain a Tweedie-type estimate of \(\mathbb E[\mathbf X_0\mid \boldsymbol{\xi}_t]\) given by $\widehat{\mathbf X}_0$ below:
\[
\widehat{\mathbf X}_0
=
\frac{1}{\rho(t)}
\left[
\boldsymbol{\xi}_t
-
\nu_x(t)\mathbf 1_D
-
\sigma_{x\mid y}(t)\boldsymbol{\varepsilon}_\theta(t)
\right]
=
\frac{1}{\rho(t)}
\left[
\boldsymbol{\xi}_t
-
\nu_x(t)\mathbf 1_D
+
c_{x\mid y}(t)\mathbf{s}_\theta(\boldsymbol{\xi}_t,t)
\right].
\]
After updating the auxiliary variables $\mathbf{Y}^{\mathcal J}$ to time \(s<t\), we define the primary state $\mathbf X_s$ by
\[
\mathbf X_s
=
\rho(s)\widehat{\mathbf X}_0
+
\nu_x(s)\mathbf 1_D
+
\sum_{\ell=1}^M
\eta_\ell(s)
\left(
\mathbf Y_s^{j_\ell}
-
u_{j_\ell}(s)\mathbf 1_D
\right)
+
\sigma_{x\mid y}(s)\boldsymbol\varepsilon_\theta(t).
\]
This update should be interpreted as a DDIM-type plug-in update rather than as an exact conditional simulation. The Gaussian conditioning formula gives the law of \(\mathbf X_s\) conditional on the true clean variable \(\mathbf X_0\) and the auxiliary state \(\mathbf Y_s^{\mathcal J}\). In the sampler, however, \(\mathbf X_0\) is replaced by the denoising estimate \(\widehat{\mathbf X}_0\), and the normalized residual noise is replaced by the score-based estimate \(\boldsymbol\varepsilon_\theta(t)\) obtained at the previous time \(t\). Thus, the reconstruction defines a deterministic coupling between the states at times \(t\) and \(s\), in a similar spirit of the DDIM method in \cite{DDIM}. It avoids the zero diffusion difficulty in $\mathbf X$ caused by $\bar\psi = 0$.

\section{Generated Figure Samples}\label{app:largefigs}

\begin{figure}[H]
	\centering
	\begin{subfigure}[t]{0.8\linewidth}
		\centering
		\includegraphics[width=\linewidth]{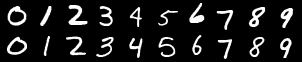}
		\caption{\(H=0.9\), \(N=2\), FID \(=0.52\).}
		\label{fig:mnist_H09N2}
	\end{subfigure}
	
	\vspace{4pt}
	\begin{subfigure}[t]{0.8\linewidth}
		\centering
		\includegraphics[width=\linewidth]{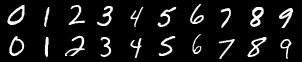}
		\caption{\(H=0.9\), \(N=4\), FID \(=7.56\).}
		\label{fig:mnist_H09N4}
	\end{subfigure}
	
	\vspace{4pt}
	\begin{subfigure}[t]{0.8\linewidth}
		\centering
		\includegraphics[width=\linewidth]{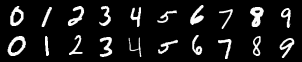}
		\caption{\(H=0.9\), \(N=6\), FID \(=24.94\).}
		\label{fig:mnist_H09N6}
	\end{subfigure}
	\captionsetup{justification=justified, singlelinecheck=false}
	\caption{Class-conditional MNIST samples generated by SVDM in the best-performing Hurst regime \(H=0.9\). Each panel shows two rows containing the digits \(0\)--\(9\).}
	\label{fig:mnist_samples}
\end{figure}

\begin{figure}
	\centering
	\includegraphics[width=0.85\textwidth]{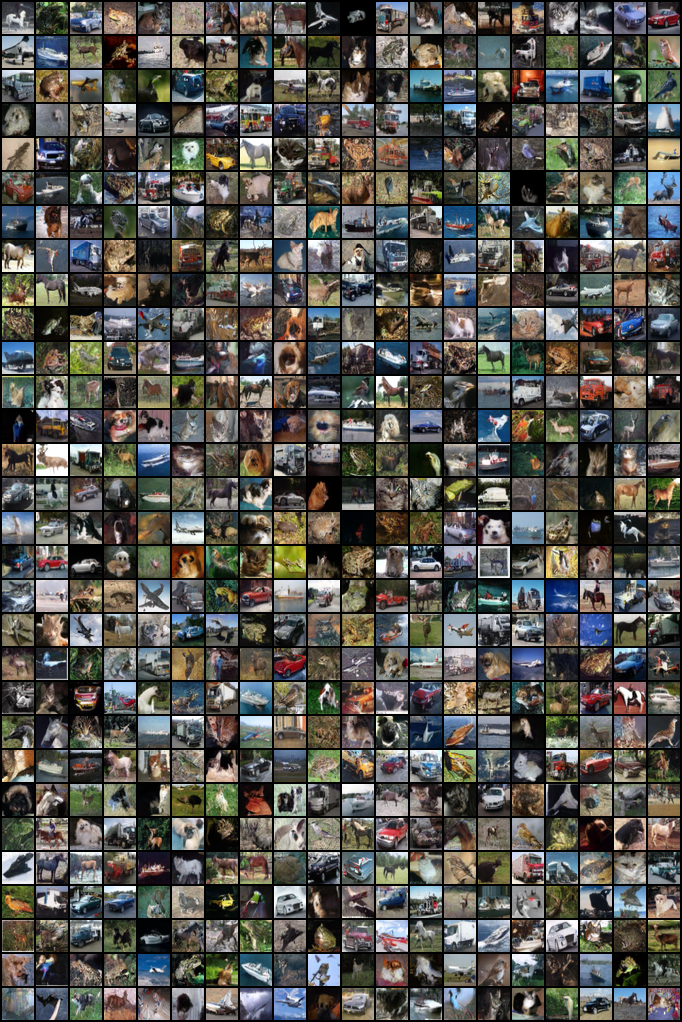}
	\captionsetup{justification=justified, singlelinecheck=false}
	\caption{CIFAR-10 samples generated by SVDM with \(H=0.9\) and \(N=2\), displayed in a \(30\times20\) grid. The FID is approximately \(9.5\).}
	\label{fig:cifar10_samples}
\end{figure}
\end{document}